\documentclass{article}

\usepackage{iclr2027_conference,times}

\usepackage[utf8]{inputenc} 
\usepackage[T1]{fontenc}    
\usepackage[hyperfootnotes=false]{hyperref}       
\usepackage{url}            
\usepackage{booktabs}       
\usepackage{amsfonts}       
\usepackage{nicefrac}       
\usepackage{microtype}      
\usepackage{xcolor}         
\usepackage{caption}
\usepackage[most]{tcolorbox}

\usepackage{amsmath,amsfonts,bm}

\def\secref#1{section~\ref{#1}}

\def\eqref#1{equation~\ref{#1}}

\def\1{\bm{1}}

\DeclareMathAlphabet{\mathsfit}{\encodingdefault}{\sfdefault}{m}{sl}
\SetMathAlphabet{\mathsfit}{bold}{\encodingdefault}{\sfdefault}{bx}{n}

\newcommand{\E}{\mathbb{E}}

\newcommand{\R}{\mathbb{R}}

\newcommand{\Var}{\mathrm{Var}}

\newcommand{\Cov}{\mathrm{Cov}}

\DeclareMathOperator*{\argmax}{arg\,max}

\DeclareMathOperator{\sign}{sign}

\usepackage{amsmath,amssymb,amsthm,mathtools}
\usepackage{amsfonts,bm}
\usepackage{booktabs}
\usepackage{algorithm}
\usepackage{algpseudocode}
\usepackage{hyperref}
\usepackage{xurl} %
\usepackage{microtype}
\usepackage{enumitem}
\usepackage{xcolor}
\usepackage{graphicx}
\usepackage{subcaption}
\usepackage{multirow}
\usepackage{xspace}
\usepackage{nicefrac}

\usepackage{booktabs}
\usepackage{makecell} %

\definecolor{darkblue}{rgb}{0, 0, 0.5}
\definecolor{revisionblue}{RGB}{0,70,180}
\newif\ifshowupdates
\showupdatesfalse
\ifshowupdates
  \newcommand{\UPDATE}[1]{{\color{revisionblue}#1}}
\else
  \newcommand{\UPDATE}[1]{{\color{black}#1}}
\fi
\hypersetup{colorlinks=true, citecolor=darkblue, linkcolor=darkblue, urlcolor=darkblue}

\usepackage{tikz}
\usetikzlibrary{arrows.meta,decorations.pathreplacing,calc,positioning}

\newtheorem{theorem}{Theorem}
\newtheorem{proposition}[theorem]{Proposition}
\newtheorem{lemma}[theorem]{Lemma}
\newtheorem{corollary}[theorem]{Corollary}
\newtheorem{definition}{Definition}
\newtheorem{remark}{Remark}

\newcommand{\inner}[2]{\left\langle #1,\,#2\right\rangle}
\newcommand{\norm}[1]{\left\|#1\right\|}
\newcommand{\op}{\mathrm{op}}
\newcommand{\nuc}{*}
\newcommand{\F}{\mathrm{F}}

\newcommand{\polar}{\mathrm{polar}}

\newcommand{\specnorm}[1]{\|#1\|_{\mathrm{op}}}
\newcommand{\nucnorm}[1]{\|#1\|_{\mathrm{nuc}}}
\newcommand{\onenorm}[1]{\|#1\|_{1}}

\newcommand{\RLB}{\UPDATE{\mathcal{R}_{\mathrm{LB}}}}
\newcommand{\Rinft}{\mathcal{R}_{\infty}}
\newcommand{\Rzer}{\mathcal{R}_{0}}
\newcommand{\Rhat}{\UPDATE{\widehat{\mathcal{R}}_{\mathrm{dir}}}}
\newcommand{\Dspec}{D_{\mathrm{spec}}}
\newcommand{\Dsign}{D_{\mathrm{sign}}}
\newcommand{\Lop}{L_{\mathrm{op}}}
\newcommand{\Linf}{L_{\infty}}
\newcommand{\SpecSGD}{\textsc{SpecSGD}}
\newcommand{\SignSGD}{\textsc{SignSGD}}

\providecommand{\tr}{\mathrm{tr}}
\providecommand{\rank}{\mathrm{rank}}

\providecommand{\E}{\mathbb{E}}

\providecommand{\R}{\mathbb{R}}

\providecommand{\Var}{\mathrm{Var}}

\def\secref#1{Section~\ref{#1}}

\def\tabref#1{Table~\ref{#1}}

\makeatletter
\def\eqref#1{\textup{\tagform@{\ref{#1}}}}
\makeatother

\newcommand{\strank}[1]{\mathrm{st}(#1)}
\newcommand{\nrank}[1]{\mathrm{nr}(#1)}
\newcommand{\NSR}{\mathrm{NSR}}
\newcommand{\ssigma}{s_{\sigma}}
\newcommand{\effrank}[1]{\mathrm{er}(#1)}
\def\Propref#1{Proposition~\ref{#1}}

\newif\ifshowtodos
\showtodosfalse
\DeclareRobustCommand{\TODO}[1]{%
  \ifshowtodos
    \textcolor{red}{\textbf{[TODO:} #1]}%
  \fi
}
\hypersetup{pdfauthor={},pdftitle={Which Geometry on Which Layer? A Principled Criterion for Mixed-Optimizer Training}}

\iclrfinalcopy
\hypersetup{
  pdftitle={Which Geometry on Which Layer? Measured Sign--Spectral Assignment in Mixed-Optimizer Training},
  pdfauthor={Hiroki Naganuma; Laura Gomezjurado; Mahdi Ghaznavi; Ganesh Talluri; Atsushi Nitanda; Seng Pei Liew; Ryuichiro Hataya; Ioannis Mitliagkas}
}

\title{Which Geometry on Which Layer?\\Measured Sign--Spectral Assignment\\in Mixed-Optimizer Training}

\author{
  \small Hiroki Naganuma\thanks{Equal contribution.}\hspace{2mm}\textsuperscript{$\dagger$ 1,2} \quad
  Laura Gomezjurado\footnotemark[1]\hspace{2mm}\textsuperscript{3} \quad
  Mahdi Ghaznavi\textsuperscript{1,2} \\
  \small\bfseries Ganesh Talluri\textsuperscript{4} \quad
  Atsushi Nitanda\textsuperscript{5,6} \quad
  Seng Pei Liew\textsuperscript{7} \\
  \small\bfseries Ryuichiro Hataya\textsuperscript{7} \quad
  Ioannis Mitliagkas\textsuperscript{1,2}
  \\[0.5em]
  \normalfont\footnotesize \textsuperscript{1}Mila \quad
  \textsuperscript{2}Universit\'{e} de Montr\'{e}al \quad
  \textsuperscript{3}Stanford University \\
  \normalfont\footnotesize \textsuperscript{4}Arizona State University \quad
  \textsuperscript{5}A$^{*}$STAR \quad
  \textsuperscript{6}Nanyang Technological University \\
  \normalfont\footnotesize \textsuperscript{7}SB Intuitions Corp.
}

\begin{document}

\maketitle
\lhead{Preprint}

\ificlrfinal
\begingroup
\renewcommand{\thefootnote}{$\dagger$}
\footnotetext{Now at NVIDIA}
\endgroup
\setcounter{footnote}{0}
\fi

\begin{abstract}
Modern training of large neural networks now commonly uses mixed recipes, assigning AdamW-like or sign-based updates to embedding and output layers and spectral updates to hidden matrices. These assignments are fixed by layer type for the entire run and are largely inherited heuristics that must be re-tuned across architectures and training regimes, without a clear criterion for selecting the appropriate geometry, that is, the norm under which the steepest descent step is taken. We turn the sign-versus-spectral assignment into a measurable, layerwise decision problem. For ideal memoryless steps, we derive the finite-batch geometry ratio
 $\RLB(W;B):=B_{\mathrm{spec}}(W)/B_{\mathrm{sign}}(W),
$ where $B_{\mathrm{spec}}$ and $B_{\mathrm{sign}}$ are conservative descent-lemma lower bounds for the specified spectral and sign steps. Its closed form separates signal efficiency from a geometry-dependent global curvature factor. In the positive-signal regime, the same smoothness assumption gives two-sided bounds that determine which step has the larger conditional expected decrease when the ratio crosses explicit noise-dependent thresholds. We then derive a local directional quadratic score from gradient alignments and Hessian--vector products and use it in Generalized-Scion (G-Scion) to control layerwise routing.
In a controlled input-rank experimental setup, including directional curvature matches the measured same-iterate ordering in all seven aggregate settings, whereas the curvature-free proxy selects the opposite geometry. On $31$M GPT-2, the output score routes the output layer from spectral warmup to sign, and G-Scion improves validation loss over fixed Scion in memoryless and matched-momentum comparisons. ViT-B/16 results further show that a spectral edge assignment can outperform fixed Scion outside language modelling. These results suggest that optimizer geometry need not be hard-coded by layer type, and they are a first step toward understanding how it should be assigned.

\end{abstract}

\section{Introduction}
\label{sec:intro}
The training of large-scale neural networks has entered the era of mixed optimizers~\citep{Liu25b,zeng2025glm}.
State-of-the-art recipes no longer apply a single update rule to every weight matrix; instead, they partition the model by layer type, typically applying \UPDATE{AdamW or sign-based methods such as SignSGD}~\citep{kingma2015adam,bernstein2018signsgd} to embedding and output layers, while reserving spectral methods like Muon~\citep{jordan2024muon} for hidden weight matrices.

Current practice relies on inherited conventions, e.g., a recipe that works for a 7B language model is ported to a vision transformer or a multimodal architecture under the assumption that ``embedding-like'' and ``hidden-like'' roles remain invariant.
When these recipes fail, practitioners must resort to exhaustive hyperparameter sweeps to re-validate the mix.
This points to a gap in optimization theory: we lack a principled criterion that says which geometry belongs on which layer, how that preference evolves across training stages, and \UPDATE{whether architectural choices alter the preference between geometries}.

We argue that the choice between optimizer geometries is a measurable property of the local loss geometry whose usefulness must be tested against tuned static assignments. \UPDATE{Every recipe that mixes sign-based and spectral methods makes an implicit layerwise assignment. We analyse this decision through the finite-batch geometry ratio}
  \[
    \UPDATE{\RLB(W; B) \;:=\; B_{\mathrm{spec}}(W) \,/\, B_{\mathrm{sign}}(W),
}\]
where \UPDATE{$B_{\mathrm{spec}}$ and $B_{\mathrm{sign}}$ are conservative lower bounds for the conditional expected decreases of ideal memoryless spectral and sign steps. Thus $\RLB>1$ exactly orders these two lower-bound quantities. Proposition~\ref{prop:lower_bound_sandwich} strengthens the comparison: beyond explicit thresholds, the same ratio also guarantees the ordering of the corresponding conditional expected decreases. The ratio} factorizes into two independently interpretable terms:
\begin{itemize}
    \item \textbf{Signal efficiency}: A state-dependent term governed by the gradient's norms and minibatch noise.
    \item \textbf{\UPDATE{Curvature ratio}} ($\kappa$): A configuration-dependent term   \UPDATE{formed from geometry-specific global smoothness constants}.
\end{itemize}

\UPDATE{The factorisation yields testable predictions across layer type, training stage, model scale, and architecture. Here we evaluate the layer and stage predictions with the local directional score introduced in \secref{sec:method}; the same criterion defines matched tests across scale and architecture. The directional score operationalizes the signal--curvature view locally, but is not generally equal to the global-smoothness lower-bound ratio.}

\paragraph{Contributions.}
 \UPDATE{We summarize our contributions as follows:
} \begin{itemize}
    \item \textbf{\UPDATE{Analytical geometry comparison}}: We derive \UPDATE{a finite-batch ratio of geometry-specific lower bounds, factor it into signal efficiency and global curvature, and give explicit thresholds beyond which it orders the conditional expected decreases of the specified conservative steps} (\secref{sec:theory}).
    \item \textbf{\UPDATE{Measurement-driven assignment}}: We derive \UPDATE{an HVP-based local directional score and use it in G-Scion to replace Scion's fixed sign--spectral layer convention with online assignment} (\secref{sec:method}).
    \item \textbf{\UPDATE{Geometry diagnostics and training outcomes}}:
    \UPDATE{Controlled experiments demonstrate the value of directional-curvature information, while language-model checkpoint comparisons reveal stage-dependent geometry preferences. G-Scion improves over fixed Scion in the reported $31$M and ViT settings. Paired held-out update tests distinguish the score's predictive accuracy from the benefits of the resulting training policy}
    (\secref{sec:lm_validation}, \secref{sec:vlm_experiments}, and Appendix~\ref{app:scheduled_calibration}).
    \end{itemize}

\section{\UPDATE{Geometry Comparison: Global Bounds and Directional Scores}}
\label{sec:theory}

\UPDATE{We first derive geometry-specific lower bounds, compare them through the finite-batch ratio $\RLB$, and factor that ratio into signal and curvature terms. A two-sided result then relates the lower-bound comparison to actual conditional expected decreases for the specified conservative steps. The online quantity $\Rhat$, defined in \secref{sec:method}, is a local directional quadratic score rather than an estimator of $\RLB$; Appendix~\ref{app:notation} summarizes the notation.}

Let $W\in\R^{m\times n}$ be a weight matrix and $f:\R^{m\times n}\to\R$ a differentiable loss with gradient $G=\nabla f(W)$, thin SVD $G=U\Sigma V^\top$, and $p=\min(m,n)$. Let $\widehat{G}=G+E$ be a minibatch estimate with additive noise $E$ at batch size $B$.

\subsection{Norm-steepest descent}

 Given a primal norm $\norm{\cdot}$, the steepest-descent direction solves $D^*(G) \in \argmax_{\norm{D}\le 1} \inner{G}{D}$, with $\inner{A}{B}=\tr(A^\top B)$. By duality, $\max_{\norm{D}\le 1}\inner{G}{D}=\norm{G}_*$, the dual norm of $G$. Three geometries yield three familiar optimizers (\tabref{tab:geometries}).

\begin{table}[t]
\centering
\caption{Three optimizer geometries as norm-steepest descent.}
\label{tab:geometries}
\small
\begin{tabular}{lccc}
\toprule
\textbf{Method} & \textbf{Primal norm} $\norm{\cdot}$ & \textbf{Dual norm} $\norm{\cdot}_*$ & \textbf{Direction} $D^*(G)$ \\
\midrule
(S)GD & Frobenius $\norm{\cdot}_\F$ & Frobenius $\norm{\cdot}_\F$ & $G/\norm{G}_\F$ \\
SignSGD & Entrywise $\ell_\infty$ & Entrywise $\ell_1$ & $\sign(G)$ \\
SpecSGD & Spectral $\specnorm{\cdot}$ & Nuclear $\nucnorm{\cdot}$ & $\polar(G) = UV^\top$ \\
\bottomrule
\end{tabular}
\end{table}

If $f$ is $L_{\norm{\cdot}}$-smooth w.r.t.\ $\norm{\cdot}$, the ideal scaled step $W^+ = W - \tfrac{\norm{G}_*}{L_{\norm{\cdot}}}\,D^*(G)$ satisfies
\begin{equation}\label{eq:ideal_step}
    f(W^+) \;\le\; f(W) - \frac{\norm{G}_*^2}{2\,L_{\norm{\cdot}}}.
\end{equation}
  \UPDATE{For the least norm-specific smoothness constants, $L_\F\le\Lop\le pL_\F$ and $\Lop\le\Linf\le mnL_\F$ (Appendix~\ref{app:smoothness}); hence $1\le\kappa:=\Linf/\Lop\le mn$}.

\subsection{\UPDATE{Stochastic Expected-Decrease Bounds}}

  \UPDATE{Let $D_g=D_g^*(\widehat G)$ satisfy $\norm{D_g}_g\le1$ and define the conditional mean alignment $\mu_g:=\E[\inner{G}{D_g}\mid W]$. For any nonnegative scalar step $\eta$, norm-specific smoothness gives
}\begin{equation}\UPDATE{\label{eq:expected_gain}
    \E[f(W)-f(W-\eta D_g)\mid W]
    \;\ge\; \eta\mu_g-\frac{L_g\eta^2}{2}.
}\end{equation}
 \UPDATE{The best nonnegative step in this lower model is $\eta=(\mu_g)_+/L_g$, yielding the lower bound $(\mu_g)_+^2/(2L_g)$. } For $\Dsign=\sign(\widehat{G})$, the sign-bit alignment bound of \citet{balles2020geometry} gives   \UPDATE{$\mu_{\mathrm{sign}} \ge \onenorm{G} - 2\nu_1(W)$ } with \UPDATE{$\nu_1(W):= \E[\onenorm{\widehat G-G}\mid W]$}, hence   \UPDATE{the conservative step $\eta_{\mathrm{sign}}=(\onenorm{G}-2\nu_1(W))_+/\Linf$ gives
}\begin{equation}\UPDATE{\label{eq:sign_gain}
    B_{\mathrm{sign}}(W) \;:=\; \frac{\bigl(\onenorm{G} - 2\nu_1(W)\bigr)_+^2}{2\Linf}.
}\end{equation}
For $\Dspec=\polar(\widehat{G})$, the elementary inequality $\inner{G}{\polar(G+E)}\ge\nucnorm{G}-2\nucnorm{E}$ (Appendix~\ref{app:polar_proof}) with \UPDATE{$\nu_\nuc(W):= \E[\nucnorm{E}\mid W]$} gives
  \begin{equation}\UPDATE{\label{eq:spec_gain}
    B_{\mathrm{spec}}(W) \;:=\; \frac{\bigl(\nucnorm{G} - 2\nu_\nuc(W)\bigr)_+^2}{2\Lop},
}\end{equation}
 \UPDATE{with the analogous conservative spectral step $\eta_{\mathrm{spec}}=(\nucnorm{G}-2\nu_\nuc(W))_+/\Lop$. } Following \citet{naganuma2026adaptivebatchsizesusing}, write $c_1:= \sqrt{B}\,\nu_1$ and $c_\nuc:= \sqrt{B}\,\nu_\nuc$.   \UPDATE{Treating these quantities as approximately batch-independent is a modelling assumption, and neither norm alone implies a universal small-batch advantage: the relative penalties $c_1/\onenorm{G}$ and $c_\nuc/\nucnorm{G}$ determine the direction of the effect}.

\subsection{Definition and factorisation}

Taking the ratio of \UPDATE{the two lower-bound quantities in Eqs.~\eqref{eq:sign_gain}--\eqref{eq:spec_gain} gives the finite-batch geometry ratio used in the theoretical factorisation}.

  \begin{definition}[\UPDATE{Finite-batch geometry ratio}] \label{def:advantage_index}
For a layer with gradient $G$, batch size $B$, geometry-specific noise scales $c_1, c_\nuc$, and smoothness constants $\Linf, \Lop$,
  \begin{equation}\UPDATE{\label{eq:advantage_index}
    \RLB(W, B) \;:=\; \frac{B_{\mathrm{spec}}(W)}{B_{\mathrm{sign}}(W)}
    \;=\; \underbrace{\left(\frac{(\nucnorm{G} - 2c_\nuc/\sqrt{B})_+}{(\onenorm{G} - 2c_1/\sqrt{B})_+}\right)^{\!2}}_{\text{signal efficiency}}
    \,\cdot\,
    \underbrace{\frac{\Linf}{\Lop}}_{\text{curvature ratio }\kappa}.
}\end{equation}
 \UPDATE{This ratio is defined when $B_{\mathrm{sign}}>0$. If either lower bound is zero, the two nonnegative quantities should be compared directly rather than through a potentially undefined ratio.
} \end{definition}

$\RLB(W,B)>1$ \UPDATE{means that the spectral lower-bound quantity is larger than the sign lower-bound quantity at the same iterate}; $\RLB<1$ \UPDATE{means} the reverse. The signal-efficiency factor depends on the current gradient\UPDATE{, while the global curvature ratio depends on the objective and parameterization.
}

\begin{proposition}[\UPDATE{Two-sided expected-decrease bounds}]\label{prop:lower_bound_sandwich}
\UPDATE{Assume that $f$ is $L_g$-smooth with $0<L_g<\infty$ under each geometry $g\in\{\mathrm{sign},\mathrm{spec}\}$. Let $M_g=\norm{G}_{g,*}$, let $\nu_{\mathrm{sign}}=\nu_1(W)$ and $\nu_{\mathrm{spec}}=\nu_\nuc(W)$, and set}
\begin{equation}
\UPDATE{S_g=M_g-2\nu_g>0,\qquad \rho_g=\frac{2\nu_g}{M_g},\qquad \eta_g=\frac{S_g}{L_g},\qquad B_g=\frac{S_g^2}{2L_g}.}
\end{equation}
\UPDATE{For $D_g=D_g^*(\widehat G)$, define the conditional expected decrease at this conservative step by $\Delta_g:=\E[f(W)-f(W-\eta_gD_g)\mid W]$. Then}
\begin{equation}
\UPDATE{B_g\le \Delta_g\le c_gB_g,
\qquad c_g=\frac{3-\rho_g}{1-\rho_g}.}
\label{eq:decrease_sandwich}
\end{equation}
\UPDATE{Consequently, when both lower-bound quantities are positive,}
\begin{equation}
\UPDATE{\frac{\RLB}{c_{\mathrm{sign}}}
\le \frac{\Delta_{\mathrm{spec}}}{\Delta_{\mathrm{sign}}}
\le c_{\mathrm{spec}}\RLB.}
\label{eq:ratio_sandwich}
\end{equation}
\UPDATE{Thus $\RLB>c_{\mathrm{sign}}$ guarantees that the specified spectral step has the larger conditional expected decrease, whereas $\RLB<1/c_{\mathrm{spec}}$ gives the analogous guarantee for sign. At zero noise, these inequalities leave the ordering unresolved for $\RLB\in[1/3,3]$.}
\end{proposition}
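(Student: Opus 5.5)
The lower bound $B_g\le\Delta_g$ follows directly from Eq.~\eqref{eq:expected_gain} at $\eta=\eta_g$. That equation gives $\Delta_g\ge\eta_g\mu_g-L_g\eta_g^2/2$, and since $\eta_g\ge0$ we may substitute the alignment lower bound $\mu_g\ge S_g$. For sign this is the \citet{balles2020geometry} bound; for spectral it is the polar inequality of Appendix~\ref{app:polar_proof} averaged over $E$. With $\eta_g=S_g/L_g$ this yields $S_g^2/L_g-S_g^2/(2L_g)=B_g$.

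For the upper bound I use the other side of $L_g$-smoothness, namely $f(W-\eta D)\ge f(W)-\eta\inner{G}{D}-\tfrac{L_g}{2}\eta^2\norm{D}_g^2$. Since $\norm{D_g}_g\le1$, this gives $f(W)-f(W-\eta_gD_g)\le\eta_g\inner{G}{D_g}+L_g\eta_g^2/2$. I then bound the alignment from above by duality, $\inner{G}{D_g}\le\norm{G}_{g,*}\norm{D_g}_g\le M_g$, and take conditional expectations to get
\[
\Delta_g\le \frac{S_gM_g}{L_g}+\frac{S_g^2}{2L_g}=B_g\Bigl(\frac{2M_g}{S_g}+1\Bigr).
\]
Next I express this in terms of $\rho_g$. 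Since $S_g=M_g(1-\rho_g)$, we have $2M_g/S_g+1=\frac{2}{1-\rho_g}+1=\frac{3-\rho_g}{1-\rho_g}=c_g$, which is exactly Eq.~\eqref{eq:decrease_sandwich}. The hypothesis $S_g>0$ ensures $\rho_g<1$, so $c_g$ is finite and $\ge 3$.

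The ratio sandwich Eq.~\eqref{eq:ratio_sandwich} follows by dividing. Combining $\Delta_{\mathrm{spec}}\ge B_{\mathrm{spec}}$ with $\Delta_{\mathrm{sign}}\le c_{\mathrm{sign}}B_{\mathrm{sign}}$ gives the left inequality, and the symmetric pairing gives the right one. Everything is positive, and $\RLB=B_{\mathrm{spec}}/B_{\mathrm{sign}}$ by Definition~\ref{def:advantage_index}, where $\nu_g=c_g'/\sqrt B$ is merely a reparametrization. The consequences are then immediate. If $\RLB>c_{\mathrm{sign}}$, the lower ratio exceeds $1$; if $\RLB<1/c_{\mathrm{spec}}$, the upper ratio is below $1$. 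At zero noise $\rho_g=0$ and $c_g=3$, which leaves $[1/3,3]$ unresolved.

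The only delicate point is the upper bound. One must make sure the smoothness quadratic enters with a plus sign, giving the two-sided smoothness inequality rather than only the descent lemma. One must also justify replacing the random alignment by the deterministic duality bound $M_g$ before taking expectations, instead of trying to control $\mu_g$ itself from above. I would also check that the definition of $L_g$-smoothness used in the paper (Lipschitz gradient in the dual norm) gives the symmetric bound $|f(W+D)-f(W)-\inner{G}{D}|\le \tfrac{L_g}{2}\norm{D}_g^2$, which is standard.
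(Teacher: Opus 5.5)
Your proposal is correct and follows the paper's proof essentially step for step: the two-sided Taylor bound from $L_g$-smoothness at the deterministic step $\eta_g$, then the alignment lower bound $\mu_g\ge S_g$ together with the H\"older bound $\mu_g\le M_g$, then the identity $2M_g/S_g+1=c_g$, and finally division of the two sandwiches. Your extra checks fill in what the paper's compressed proof leaves implicit: that $S_g>0$ forces $\rho_g<1$ and $c_g\ge 3$, and that the dual-norm Lipschitz definition gives the symmetric remainder bound.
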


The proof is given in Appendix~\ref{app:additional_theory}.

\begin{remark}[\UPDATE{From lower bounds to decrease ordering}]\label{rem:lower_bound_ratio}
\UPDATE{The ratio $B_{\mathrm{spec}}/B_{\mathrm{sign}}$ is not generally equal to either the conditional expected-decrease ratio or a realised sample ratio. Proposition~\ref{prop:lower_bound_sandwich} upgrades the lower-bound comparison into an ordering guarantee for the stated conservative steps beyond the two thresholds, while leaving the intermediate range unresolved. The local directional score in Eq.~\eqref{eq:empirical_index} is a separate construction used for online routing.}
\end{remark}

\subsection{\UPDATE{The Finite-Batch Ratio and Its Simplifications}}

The \UPDATE{finite-batch geometry ratio} $\RLB$ is \UPDATE{operationally heavy because estimating $c_1,c_\nuc,\Linf,\Lop$ on the fly is expensive. Two simplifications} drop terms under stated assumptions.

\paragraph{High-SNR limit $\Rinft$.}
When $\nucnorm{G}\gg 2 c_\nuc/\sqrt{B}$ and $\onenorm{G}\gg 2 c_1/\sqrt{B}$, drop the noise corrections:
\begin{equation}\label{eq:high_snr}
    \Rinft(W) \;:=\; \frac{\nucnorm{G}^2}{\onenorm{G}^2}\cdot\kappa
    \;=\; \frac{\mathrm{nr}(G)}{mn\cdot\phi_{\mathrm{ent}}(G)}\cdot\kappa,
\end{equation}
where $\mathrm{nr}(G) = \nucnorm{G}^2/\norm{G}_\F^2$ is the nuclear rank and $\phi_{\mathrm{ent}}(G)=\onenorm{G}^2/(mn\norm{G}_\F^2)$ is the entrywise density. \UPDATE{This simplification is valid only when both stated relative-noise terms are small.} $\Rinft$ is the \UPDATE{high-SNR form of the global-smoothness lower-bound ratio and motivates the local directional score}.

\paragraph{Curvature-free proxy $\Rzer$.}
Drop the curvature ratio as well:
\begin{equation}\label{eq:is_def}
    \Rzer(W) \;:=\; \frac{\nucnorm{G}^2}{\onenorm{G}^2}.
\end{equation}
The three forms therefore satisfy
\[
    \RLB(W;B) \;=\; \Rinft(W)\cdot [\text{noise correction factor}],
    \qquad
    \Rinft(W) \;=\; \Rzer(W)\cdot \kappa.
\]
For nonzero $G$, $\Rzer\le1$ because $\nucnorm{G}\le\onenorm{G}$; raw $\Rzer$ cannot favor spectral at a unit threshold. A dimension-normalized proxy $I_{\mathrm{proxy}}=\sqrt{mn\,\Rzer}$ is a different score: its unit-threshold decision is equivalent to substituting $\kappa=mn$, not to estimating curvature. Thus raw and normalized proxy readings must not be interchanged. $\Rzer$ approximates $\Rinft$ only when $\kappa\approx1$; the controlled sweep in Appendix~\ref{app:is_bounds} tests measured directional curvature, not the global $\kappa$.

\subsection{\UPDATE{Architecture-Dependent Curvature Hypothesis}}
\label{sec:activation_theory}

  \UPDATE{\citet[Sec.~7]{DavisDrusvyatskiy2025SpecGD} relate feature-based spectral and Frobenius majorizer coefficients through the stable rank of the incoming activations. Their analysis does not identify $L_\infty$ and therefore does not determine the sign--spectral curvature ratio $\kappa=L_\infty/L_\op$. We use activation stable rank as motivation for measuring architecture-dependent directional curvature; Appendix~\ref{app:activation_theory} gives the precise scope of the cited result.}

The activation moment ratio, its assumptions and the homogeneous specialization are collected in Appendix~\ref{app:additional_theory}; this motivation does not identify the sign--spectral smoothness ratio.

\subsection{Predictions}

\UPDATE{The factorisation predicts that the measured geometry score can depend on layer, training stage, scale, and architecture. The present directional-score evidence tests layer and stage dependence: the $31$M output lies on the sign side at the conservative learning rate, while a learning-rate-stratified aggregate changes sign across phases. Scale and architecture dependence remain falsifiable predictions for matched experiments that jointly verify the score, realised route, and static geometry controls.}

\section{\UPDATE{G-Scion: Measurement-Driven Geometry Assignment}}
\label{sec:method}

\UPDATE{The ratio $\RLB$ provides the analytical decomposition, but evaluating its global constants $\Linf$ and $\Lop$ online is expensive and conservative. G-Scion operationalizes the same signal--curvature view with a local directional quadratic score and uses that measurable quantity to route each eligible layer between sign and spectral updates.}

\begin{algorithm}[t]
\caption{G-Scion}
\label{alg:g-scion}
\begin{algorithmic}[1]
\State \textbf{Input:} cadence $N$, samples $K$, threshold $\tau$
\Statex \hspace{\algorithmicindent}\UPDATE{margin $m$, EMA factor $\beta$, confirmations $q$, warmup $t_{\mathrm w}$, eligible set $L_{\text{elig}}$, initial $\{g_\ell\}$}
\State $L_{\text{lock}} \gets \emptyset$\UPDATE{; $h_\ell\gets0$ for $\ell\in L_{\text{elig}}$}
\For{training step $t = 1,2,\dots$}
    \State Take \UPDATE{an} update using current $g_\ell$
    \If{\UPDATE{$t\ge t_{\mathrm w}$ and $t \bmod N = 0$}}
        \For{each $\ell \in L_{\text{elig}} \setminus L_{\text{lock}}$}
            \State \UPDATE{estimate a valid $\Rhat_\ell$}
            \State \UPDATE{$z_\ell\gets\mathrm{EMA}_\beta(\log\Rhat_\ell)$}
            \If{\UPDATE{$g_\ell=\textsc{Sign}$}}
                \State \UPDATE{$b_\ell\gets[z_\ell>\tau+m]$}
            \Else
                \State \UPDATE{$b_\ell\gets[z_\ell<\tau-m]$}
            \EndIf
            \State \UPDATE{$h_\ell\gets h_\ell+1$ if $b_\ell$, else $h_\ell\gets0$}
            \If{\UPDATE{$h_\ell\ge q$}}
                \State flip $g_\ell$;\, $L_{\text{lock}} \gets L_{\text{lock}} \cup \{\ell\}$
            \EndIf
        \EndFor
    \EndIf
\EndFor
\end{algorithmic}
\end{algorithm}

\subsection{\UPDATE{Directional Quadratic Score }\texorpdfstring{$\Rhat$}{R-hat}}

For a   \UPDATE{deterministic quadratic objective, a fixed } direction $D$,   \UPDATE{alignment $s(D):=\inner{G}{D}$, and positive directional curvature $C(D):=\inner{D}{\nabla^2 f(W)[D]}>0$,
}\[
 \UPDATE{f(W)-f(W-\eta D)=\eta s(D)-\tfrac12\eta^2C(D).
}\]
  \UPDATE{The best nonnegative line-search step is $\eta_D^*=(s(D))_+/C(D)$ and its exact decrease is $(s(D))_+^2/(2C(D))$. Motivated by this identity, the implementation averages stochastic alignment and HVP probes to obtain $\widehat s_g$ and $\widehat C_g$ for $g\in\{\mathrm{sign},\mathrm{spec}\}$ and computes
}\begin{equation}\UPDATE{\label{eq:empirical_index}
    \Rhat(W)
    \;:=\;
    \frac{(\widehat s_{\mathrm{spec}})_+^2 \,/\, \widehat C_{\mathrm{spec}}}{(\widehat s_{\mathrm{sign}})_+^2 \,/\, \widehat C_{\mathrm{sign}}},
    \qquad \widehat C_{\mathrm{spec}},\widehat C_{\mathrm{sign}}>0,
    \quad (\widehat s_{\mathrm{sign}})_+>0.
}\end{equation}
\UPDATE{For fixed directions on a deterministic positive-curvature quadratic, Eq.~\eqref{eq:empirical_index} equals the ratio of the two exact line-optimal decreases because their common factor $1/2$ cancels. In stochastic training, $\Rhat$ is a Monte Carlo local-quadratic routing score. It need not equal the global-smoothness ratio $\RLB$, because directional curvature need not attain the global constants; Appendix~\ref{app:diagnostic_details} gives the estimator details and safeguards.
}

\subsection{G-Scion: One-Way Per-Layer Switching}

\UPDATE{In the $31$M experiments, G-Scion uses $N{=}50$, $K{=}12$, nominal threshold $\tau{=}0$, log margin $m{=}0.1$, EMA factor $\beta{=}0.9$, and two consecutive confirmations. Eligible groups are forced to spectral during warmup and may then flip once; hidden layers remain spectral following the matched Scion design. The variants \emph{G-Scion (embedding)}, \emph{G-Scion (output)}, and \emph{G-Scion (embedding, output)} differ only in the eligible edge groups. Full settings are in Appendix~\ref{app:gscion}.}

\UPDATE{At each decision phase, G-Scion computes the directional score from signal, direction, and HVP probes, smooths its logarithm, and switches only after two margin-clearing confirmations (Algorithm~\ref{alg:g-scion}). One-way locking is an anti-oscillation safeguard. Diagnostic cost is amortized by the cadence and grows with the probe count and number of unlocked groups. A separate end-to-end timing experiment with $N{=}100$ measured approximately $4\%$ overhead for that configuration.}

\subsection{Relationship to Existing Mixed-Optimizer Recipes}
\UPDATE{Standard mixed-optimizer recipes fix geometry by layer type. Scion is the matched sign-versus-spectral baseline: replacing only its assignment mechanism isolates the binary decision studied here. Practical Muon recipes place AdamW on embedding and output groups and introduce momentum and coordinate-wise adaptivity, so we report AdamW+Muon as a strong practical reference rather than an upper bound or a like-for-like baseline. G-Scion replaces Scion's fixed assignment with the measured score $\Rhat$.}

\section{LM Pre-Training Experiments}
\label{sec:lm_validation}

We evaluate the layer- and stage-dependence predictions on GPT-2 pre-training, where shared hidden architecture makes per-layer diagnostics clean.

\paragraph{Setup.}
\UPDATE{We train a $31$M-parameter NanoGPT model on OpenWebText with four variants. \textbf{Scion} fixes SignSGD on embedding/output and Muon on hidden matrices. \textbf{G-Scion (embedding, output)}, \textbf{G-Scion (embedding)}, and \textbf{G-Scion (output)} use the gate of \secref{sec:method} on the indicated edge group(s); all other assignments match Scion.}

\paragraph{Results.}
\UPDATE{The like-for-like baseline is fixed Scion. Table~\ref{tab:gscion}(a) shows that the G-Scion assignment trajectory lowers validation loss by $0.355$ in the memoryless comparison and by $0.129$ with matched Signum momentum (GELU, three seeds). Table~\ref{tab:gscion}(b) decomposes the memoryless outcome across edge groups. These comparisons measure the performance of the resulting assignments. A separate five-seed follow-up compares all four static assignments and the gate (Appendix~\ref{app:assignment_followup}); it does not support a benefit over the best static assignment. AdamW+Muon is reported in Appendix~\ref{app:lm_owt} as a strong practical reference outside the memoryless comparison.}

\begin{table}[t]
\centering

\caption{\UPDATE{GPT-2/OWT, $31$M ($2{,}000$ steps, lr $10^{-3}$, effective batch $128$). (a) Resulting G-Scion versus fixed-Scion trajectories with memoryless SignSGD and with matched Signum momentum on the edge groups (GELU, $3$ seeds); Muon is fixed on hidden matrices. (b) Edge-group decomposition in the memoryless setting, averaged over GELU, ReLU$^2$, and SiLU at seed $1337$. $\Delta$ is the validation-loss reduction relative to the matched Scion row.}}
\label{tab:gscion}
\small

\noindent\textbf{(a)}~Final val loss, sign-vs-spectral by momentum regime.\\[0.25em]
\begin{tabular}{lcc}
\toprule
\textbf{Method} & \textbf{Final val loss} & \textbf{$\Delta$ vs.\ matched Scion} \\
\midrule
\multicolumn{3}{l}{\emph{w/o Momentum (SignSGD on embed/head, Muon on hidden)}} \\
Scion w/o Momentum                              & $5.529 \pm 0.012$ & --- \\
\textbf{G-Scion w/o Momentum}                    & $\mathbf{5.174 \pm 0.011}$ & $\mathbf{+0.355}$ \\
\midrule
\multicolumn{3}{l}{\emph{w/ Momentum (Signum on embed/head, Muon on hidden)}} \\
Scion w/ Momentum                               & $5.084 \pm 0.013$ & --- \\
\textbf{G-Scion w/ Momentum}                     & $\mathbf{4.955 \pm 0.004}$ & $\mathbf{+0.129}$ \\
\bottomrule
\end{tabular}\\
\medskip
\noindent\textbf{(b)}~Final val loss, w/o Momentum (mean over GELU, ReLU$^2$, SiLU; $\Rhat$).\\[0.25em]
\begin{tabular}{lrr}
\toprule
\textbf{Method} & \textbf{Final val loss} & \textbf{$\Delta$ vs.\ Scion} \\
\midrule
Scion w/o Momentum & $5.516$ & --- \\
G-Scion w/o Momentum (embedding) & $5.293$ & $+0.22$ \\
G-Scion w/o Momentum (output) & $5.447$ & $+0.07$ \\
\textbf{G-Scion w/o Momentum (embedding, output)} & $\mathbf{5.187}$ & $\mathbf{+0.33}$ \\
\bottomrule
\end{tabular}
\end{table}

\paragraph{\UPDATE{The $31$M output supplies a sign-side routing case.}}
\UPDATE{The eligible groups are forced to spectral during warmup. At the first measurement, embedding readings depend on the gated variant and are therefore not used as selector evidence. The reported six gated-output runs route the output to sign at $t{=}150$ and remain there. The gate uses two confirmations of an EMA-smoothed log score, rather than a pooled raw-score mean; Appendix~\ref{app:lm_owt} distinguishes these quantities and the limits of the historical summaries. The historical proxy-run comparison is not used as evidence because its exact normalization and gate configuration remain unverified; raw $\Rzer\le1$ cannot favor spectral at a unit threshold (Appendix~\ref{app:lm_owt}). The controlled same-iterate sweep in Appendix~\ref{app:exp4_kappa} separately isolates the added directional-curvature factor within $R_{\mathrm{quad}}=\Rzer\hat\kappa_{\mathrm{dir}}$. Table~\ref{tab:lr_output_logA} reports a phase-stratified sign change at the highest learning rate; it is an aggregate across runs, not a trace of a second switch.}

\begin{figure}[t]
\centering
\includegraphics[width=0.96\linewidth]{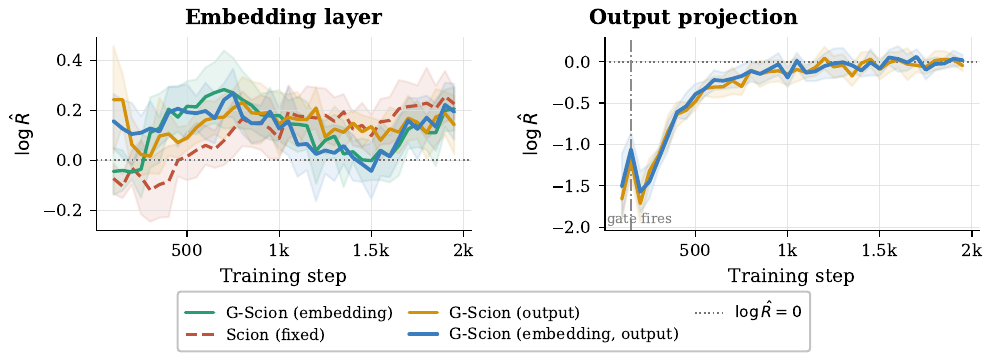}
\caption{\UPDATE{\textbf{Per-layer $\log\Rhat$ diagnostics} (OWT, $31$M, $2{,}000$ steps; mean $\pm$ std across three activations per method). \emph{Left:} embedding diagnostics are descriptive because eligible embeddings were already forced spectral during warmup and their first readings differ by variant. \emph{Right:} output-gated variants start spectral with a clearly negative score, satisfy the two-confirmation rule at $t{=}150$ (dashed), and route to sign. Scion is diagnostic only and does not gate.}}
\label{fig:lm_logA}
\end{figure}

\paragraph{Static assignments and stage dependence.}
Table~\ref{tab:assignment_followup} compares all four static edge assignments with G-Scion in a separate $31$M GPT-2/OWT follow-up. Static PP (spectral embedding and output) reaches a loss of $4.92$, versus $5.45$ for G-Scion. Early output-head recommendations favoring sign agree with the paired held-out decreases on three diagnostic seeds, but the spectral updates become better around step $700$, after the gate locks at step $150$ of $2{,}000$. The lock prevents adaptation to this later preference but it does not by itself establish the cause of the final-loss gap. A warm-up variant and a frozen-route replay both reach a loss of $5.23$ (rounded). These results distinguish the local score agreement from routing effectiveness as the gate does not outperform the best static assignment in this study.

\begin{table}[htbp]
\centering
\caption{Reported $31$M GPT-2/OWT follow-up. S denotes sign and P spectral; the embedding assignment is listed first. Lower validation loss is better. Values are rounded figure labels; per-seed values and the plotted error-bar definition are not available in the accompanying summary, so no uncertainty estimates or significance tests are inferred.}
\label{tab:assignment_followup}
\begin{tabular}{lc}
\toprule
Assignment or routing variant & Final validation loss \\
\midrule
PP: spectral on both edge groups & 4.92 \\
Frozen-route replay & 5.23 \\
G-Scion with warm-up & 5.23 \\
SP: spectral on output only & 5.35 \\
PS: spectral on embedding only & 5.40 \\
G-Scion & 5.45 \\
G-Scion with reversal allowed & 5.45 \\
SS: sign on both edge groups (fixed Scion) & 5.68 \\
\bottomrule
\end{tabular}
\end{table}

\paragraph{\UPDATE{Scale evidence.}}
\UPDATE{The available larger-scale runs do not form a matched score-and-assignment test: the $1$B panel used the curvature-free proxy, and several $163$M/$406$M method--seed cells lack complete directional records. We therefore treat scale dependence as a prediction and specify the matched directional-score, realised-route, and static-control comparison in Appendix~\ref{app:lm_scale_sweep}.}

  \section{\UPDATE{Cross-Architecture Assignment Evidence}}
\label{sec:vlm_experiments}

\paragraph{\UPDATE{ViT assignment outcome.}}
\UPDATE{We apply the same four edge-assignment variants to ViT-B/16 trained from scratch on CIFAR-100. At learning rate $3\times10^{-3}$, the joint spectral-edge trajectory exceeds fixed Scion by $11.9$, $11.5$, and $7.9$ validation-accuracy points at effective batches $128$, $256$, and $512$, respectively (Table~\ref{tab:vit_cifar_batch}). This shows that Scion's inherited sign assignment on embedding/output is not uniformly best outside language modelling. The ViT score readings are frequently nonfinite, however, so this experiment is assignment-outcome evidence rather than validation of $\Rhat$ or of online gating.}

\paragraph{\UPDATE{Weight-tied VLM scope.}}
\UPDATE{In nanoVLM~\citep{nanovlm2025}, the language embedding and output head are aliases of one shared parameter. A valid geometry comparison must therefore route the shared tensor exactly once and measure its joint gradient and HVP, rather than treating ``embedding'' and ``head'' as independent optimizer groups. The historical configurations did not realise the intended matched sign-versus-spectral head comparison and are excluded from the empirical claims. A corrected VLM evaluation requires explicit alias/identity logging, one realised optimizer route for the shared tensor, static sign and spectral controls, and a jointly measured shared-tensor score.}

 \section{Related Work}
\label{sec:related}

\paragraph{Spectral and sign geometries.}
Most modern optimizers~\citep{duchi2011adaptive,kingma2015adam,martens2015optimizing,gupta2018shampoo} are preconditioned Euclidean SGD; a complementary line targets the geometry of matrix-valued updates~\citep{carlson2015stochastic}, with Muon~\citep{jordan2024muon} as a recent instantiation that scales to LLMs~\citep{Liu25b,zeng2025glm}. On the sign side, SignSGD~\citep{bernstein2018signsgd} and Adam's entrywise-steepest interpretation~\citep{balles2020geometry} are the foundation; we use the sign-alignment lemma of \citet{balles2020geometry} in Eq.~\eqref{eq:sign_gain}, and the non-Euclidean noise scales of \citet{naganuma2026adaptivebatchsizesusing} cover finite batch size.

\paragraph{Stable rank and activations.}
\UPDATE{\citet{DavisDrusvyatskiy2025SpecGD} link a spectral-vs-GD quadratic-model comparison to gradient nuclear rank and post-activation stable rank, and bound the latter under explicit random-feature assumptions. Their result motivates measuring architecture-dependent curvature, but it does not determine the sign-vs-spectral ratio $L_\infty/L_\op$. Our extension to sign-vs-spectral is therefore a signal--curvature factorisation derived from the geometry-specific lower bounds together with an empirically evaluated directional score, not a corollary of their stable-rank theorem.
}

\paragraph{Activation-optimizer interactions.}
\UPDATE{Modern transformers use diverse activations~\citep{hendrycks2016gelu, ramachandran2017swish, so2021primer}, typically treated independently of optimizer choice. \citet{DavisDrusvyatskiy2025SpecGD} connect activation stable rank to a spectral-versus-Frobenius model comparison under stated conditions. We use that result as motivation rather than as a sign-versus-spectral guarantee. A causal activation test varies the activation alone while measuring the gradient, score, and realised route jointly.}

\section{Discussion}
\label{sec:discussion}

\paragraph{Single-step prediction and training outcomes.}
We use G-Scion's theoretically motivated signal to probe which geometry suits the current layer and training stage, separating two questions: does the score identify the better single step, and does following it improve the full run? The follow-up reports correct early sign recommendations on the output head and a later change in the better candidate (Appendix~\ref{app:assignment_followup}). This supports stage-dependent preferences, not universal score calibration. In the separate scheduled-update development study, the full-group score selects the better held-out step in 219/288 cases (76.0\%), versus 264/288 (91.7\%) for always spectral; mean regrets are 0.005037 and 0.003471 nats/token. The original self-curvature score achieves 205/284 on defined ratios (Appendix~\ref{app:scheduled_calibration}). These are single-step comparisons, not percentages of better training runs. For the second question, the follow-up's static spectral assignment beats the gate, which locks before the reported preference change. Warm-up gating and frozen replay have equal rounded endpoints. This motivates separating the measurement from the routing rule, but does not isolate the loss gap's cause; warm-up, smoothing and locking require controlled ablations.

\paragraph{Analytical and operational roles.}
\UPDATE{Proposition~\ref{prop:lower_bound_sandwich} turns the lower-bound comparison into an actual conditional expected-decrease ordering guarantee beyond the stated thresholds for ideal memoryless norm-steepest steps at a common iterate. G-Scion uses the related local directional score to route training. Trajectory dominance and joint optimality are separate questions because the implementation also includes stochastic estimation, scheduling, approximate polar factors, and simultaneous block updates. AdamW+Muon is a strong practical reference rather than a theoretical upper bound}~\citep{sato2025muonconvergence}.

\paragraph{Practical comparison and theory scope.}
The reported AdamW-edge/Muon reference attains validation loss $4.755\pm0.035$, compared with $4.851\pm0.016$ for G-Muon; lower is better. This negative practical comparison remains relevant even though AdamW is outside the memoryless theorem. Equal-budget tuning and complete state-dependent proposal tests are still required. At zero noise, the corrected bound certifies a spectral decision only for $\RLB>3$ and a sign decision only for $\RLB<1/3$; the interval between them is unresolved and widens with noise. No $\RLB$ values were estimated in the new LM study, so its directional-score bins are not theoretical certification regions. Frobenius pairwise validation remains incomplete; the analytical extension in Appendix~\ref{app:additional_theory} establishes neither empirical dominance nor inferiority.

\paragraph{Limitations.}
\UPDATE{(i) The layerwise score omits cross-block curvature, so single-block predictions need not identify the best joint assignment.
(ii) One-way locking can preserve an obsolete assignment. The follow-up's best static route beats its gate; allowing reversal alone does not recover a switch. The proposed roles of locking, smoothing and warm-up require matched ablations.
(iii) Directional-score evidence is currently concentrated at $31$M; scale and VLM generalisation require the matched comparisons described above.
(iv) $\Rhat$ requires Hessian--vector products. The measured overhead was approximately $4\%$ in a separate $N{=}100$ timing experiment and varies with cadence, probes, unlocked groups, model, and implementation.
(v) ViT currently provides assignment-outcome evidence rather than validation of its directional-score trace.
(vi) The follow-up reports five seeds for four static routes and the gate, and three for checkpoint diagnostics; other cells use two or three. Broader robustness and threshold calibration remain open.}

\section{Conclusion}
\label{sec:conclusion}

G-Scion studies measured sign-versus-spectral assignment. The finite-batch lower-bound ratio provides sufficient expected-decrease conditions for specified conservative memoryless steps; it does not certify the implemented gate at a threshold of one. The directional score has a line-optimal quadratic interpretation, but our paired scheduled-update development results show substantial ranking errors. An actual-displacement predictor is promising on the same development data and remains unconfirmed. Reported gains over fixed Scion do not establish superiority over the best static assignment or AdamW/Muon. The present evidence therefore supports a diagnostic framework and motivates independent calibration and controlled assignment studies, rather than a general claim of optimal routing or practical optimizer superiority.

\subsection*{AI use statement}
Generative AI was used to polish and condense the writing.

\subsection*{Ethics statement}
The work studies optimizer geometry and training dynamics using public datasets, public model architectures, and controlled training experiments. It does not involve human subjects, crowdsourcing, private user data, deception, or deployment of high-risk systems. The paper's claims are technical and are framed within the limitations of the experiments.

\subsection*{Reproducibility statement}
\UPDATE{The paper specifies the datasets, model families and sizes, optimizer variants, edge groups, cadence $N$, probe count $K$, margin, EMA, confirmation rule, learning rates, batch sizes, training steps, seeds, and the alignment/HVP quantities used by the directional score. Section~\ref{app:experiments} records the provenance limits of the larger-scale, ViT-score, and historical VLM artifacts. Code-level reproduction is limited by the incomplete submitted repository, as acknowledged in Appendix~\ref{app:code_availability}; the follow-up's per-seed records and full configuration mapping also remain to be archived (Appendix~\ref{app:assignment_followup}).}

\ificlrfinal
\subsubsection*{Acknowledgements}

The authors also acknowledge the support of Compute Canada and Mila computing clusters for experimental resources.
\fi

\bibliographystyle{iclr2027_conference}
\bibliography{references}

\clearpage
\appendix

\section*{Appendix Contents}
\begin{tabular}{@{}p{0.91\linewidth}r@{}}
\ref{app:notation}\quad\hyperref[app:notation]{Notation} & \pageref{app:notation} \\
\ref{app:convergence_theory}\quad\hyperref[app:convergence_theory]{Convergence Theory} & \pageref{app:convergence_theory} \\
\ref{app:is_bounds}\quad\hyperref[app:is_bounds]{$\Rzer$ Approximation Bounds} & \pageref{app:is_bounds} \\
\ref{app:activation_theory}\quad\hyperref[app:activation_theory]{Activation-Dependent Stable Rank Theory} & \pageref{app:activation_theory} \\
\ref{app:experiments}\quad\hyperref[app:experiments]{Experimental Details} & \pageref{app:experiments} \\
\ref{app:related_work}\quad\hyperref[app:related_work]{Comprehensive Related Work} & \pageref{app:related_work} \\
\ref{app:compute}\quad\hyperref[app:compute]{Compute Resources} & \pageref{app:compute} \\
\ref{app:scheduled_calibration}\quad\hyperref[app:scheduled_calibration]{Held-out Scheduled-Update Calibration} & \pageref{app:scheduled_calibration} \\
\ref{app:assignment_followup}\quad\hyperref[app:assignment_followup]{Static Assignments and Stage-Dependent Preferences} & \pageref{app:assignment_followup} \\
\ref{app:additional_theory}\quad\hyperref[app:additional_theory]{Additional Theoretical Details} & \pageref{app:additional_theory} \\
\end{tabular}

\section{Notation}
\label{app:notation}

This appendix collects the notation used throughout the paper, grouped by category. All definitions are consistent with the main-text usage in \secref{sec:theory}.

\subsection{Parameters, gradients, and batch quantities}
\label{app:notation_grad}

\begin{table}[H]
\centering
\small
\begin{tabular}{@{}>{\raggedright\arraybackslash}p{0.18\linewidth}>{\raggedright\arraybackslash}p{0.76\linewidth}@{}}
\toprule
\textbf{Symbol} & \textbf{Meaning} \\
\midrule
$W \in \R^{m\times n}$ & weight matrix of a given layer \\
$m, n$ & output / input dimension of the layer \\
$p:= \min(m, n)$ & smaller side of the matrix \\
$f: \R^{m\times n} \to \R$ & (possibly regularised) training loss at parameter $W$ \\
$G:= \nabla f(W)$ & full-batch (population) gradient at $W$ \\
$G = U \Sigma V^\top$ & thin SVD of $G$, with $\Sigma \in \R^{p\times p}$ \\
$\widehat{G} = G + E$ & minibatch gradient estimate; $E$ is mean-zero additive noise \\
$B$ & minibatch (or effective-batch) size used to form $\widehat{G}$ \\
$K$ & number of independent minibatch gradients used by $\Rhat$ (\UPDATE{$12$ in the reported $31$M runs}) \\
$N$ & decision cadence of the G-Scion gate (\UPDATE{$50$ steps in the reported $31$M runs}) \\
$t$ & training step index \\
$\tau$ & \UPDATE{nominal} switching threshold in $\log \Rhat$ ($0$) \\
\UPDATE{$m$} & \UPDATE{log-score margin ($0.1$ in the reported $31$M runs)} \\
\UPDATE{$\beta$} & \UPDATE{EMA factor ($0.9$ in the reported $31$M runs)} \\
\UPDATE{$q$} & \UPDATE{required consecutive confirmations ($2$ in the reported $31$M runs)} \\
\bottomrule
\end{tabular}
\end{table}

\subsection{Norms, inner products, and matrix operators}
\label{app:notation_norms}

\begin{table}[H]
\centering
\small
\begin{tabular}{@{}p{0.30\linewidth}p{0.66\linewidth}@{}}
\toprule
\textbf{Symbol} & \textbf{Meaning} \\
\midrule
$\inner{A}{B}:= \tr(A^\top B)$ & Frobenius inner product \\
$\norm{\cdot}_\F$ & Frobenius norm, $\norm{A}_\F = \sqrt{\inner{A}{A}}$ \\
$\specnorm{\cdot} = \norm{\cdot}_{\op}$ & spectral (operator) norm, largest singular value \\
$\nucnorm{\cdot} = \norm{\cdot}_{\nuc}$ & nuclear (trace) norm, sum of singular values \\
$\onenorm{\cdot} = \norm{\cdot}_{1}$ & entrywise $\ell_1$ norm, $\sum_{ij}|A_{ij}|$ \\
$\norm{\cdot}_\infty$ & entrywise $\ell_\infty$ norm, $\max_{ij}|A_{ij}|$ \\
$\norm{\cdot}_*$ & dual norm of $\norm{\cdot}$ \\
$\sign(G)$ & entrywise sign of $G$ (with $\sign(0):=0$) \\
$\polar(G) = U V^\top$ & polar factor of $G$ (partial isometry from its SVD) \\
$\tr(\cdot)$ & matrix trace \\
$\rank(\cdot)$ & matrix rank \\
\bottomrule
\end{tabular}
\end{table}

The three geometries of interest correspond to the pairs
$(\norm{\cdot}, \norm{\cdot}_*) = (\norm{\cdot}_\F, \norm{\cdot}_\F)$ for (S)GD,
$(\norm{\cdot}_\infty, \onenorm{\cdot})$ for SignSGD,
and $(\specnorm{\cdot}, \nucnorm{\cdot})$ for SpecSGD; see \tabref{tab:geometries}.

\subsection{Smoothness constants and the curvature ratio}
\label{app:notation_smoothness}

\begin{table}[H]
\centering
\small
\begin{tabular}{@{}ll@{}}
\toprule
\textbf{Symbol} & \textbf{Meaning} \\
\midrule
$L_{\norm{\cdot}}$ & smoothness constant of $f$ w.r.t.\ a primal norm $\norm{\cdot}$ \\
$L_\F$ & Frobenius smoothness constant \\
$\Lop$ & operator (spectral) smoothness constant \\
$\Linf$ & entrywise $\ell_\infty$ smoothness constant \\
$\kappa:= \Linf / \Lop$ & \emph{curvature ratio} (central configuration-side quantity) \\
$C(D):= \inner{D}{\nabla^2 f(W)[D]}$ & directional curvature along unit direction $D$ \\
\bottomrule
\end{tabular}
\end{table}

The smoothness constants satisfy $\Lop \le p \cdot L_\F$ and $L_\F \le \Linf$
(Appendix~\ref{app:smoothness}, \Propref{prop:smoothness_relations}).

\subsection{Update directions and one-step improvements}
\label{app:notation_directions}

\begin{table}[H]
\centering
\small
\begin{tabular}{@{}ll@{}}
\toprule
\textbf{Symbol} & \textbf{Meaning} \\
\midrule
$D^*(G) \in \argmax_{\norm{D}\le 1} \inner{G}{D}$ & norm-steepest-descent direction for primal norm $\norm{\cdot}$ \\
$\Dsign:= \sign(\widehat{G})$ & SignSGD update direction \\
$\Dspec:= \polar(\widehat{G})$ & SpecSGD / Muon-like update direction \\
$W^+$ & next iterate after an ideal scaled step from $W$ \\
\UPDATE{$\Delta_g(W;\eta)$} & \UPDATE{conditional expected one-step decrease under geometry $g$ at step $\eta$} \\
\UPDATE{$B_{\mathrm{sign}}(W)$} & \UPDATE{SignSGD conservative lower bound (Eq.~\eqref{eq:sign_gain})} \\
\UPDATE{$B_{\mathrm{spec}}(W)$} & \UPDATE{SpecSGD conservative lower bound (Eq.~\eqref{eq:spec_gain})} \\
\bottomrule
\end{tabular}
\end{table}

\subsection{Noise scales}
\label{app:notation_noise}

\begin{table}[H]
\centering
\small
\begin{tabular}{@{}ll@{}}
\toprule
\textbf{Symbol} & \textbf{Meaning} \\
\midrule
$E:= \widehat{G} - G$ & stochastic gradient noise matrix \\
$\nu_1(W):= \E[\onenorm{\widehat{G} - G}\mid W]$ & $\ell_1$ (sign-geometry) noise scale \\
$\nu_\nuc(W):= \E[\nucnorm{E}\mid W]$ & nuclear-norm (spectral-geometry) noise scale \\
$c_1:= \sqrt{B}\cdot \nu_1$ & batch-size-normalised $\ell_1$ noise scale \\
$c_\nuc:= \sqrt{B}\cdot \nu_\nuc$ & batch-size-normalised nuclear noise scale \\
$\nrank{G}:= \nucnorm{G}^2 / \norm{G}_\F^2$ & nuclear rank (effective rank in the $\nuc$-vs-$\F$ sense) \\
$\phi_{\mathrm{ent}}(G):= \onenorm{G}^2 / (mn\,\norm{G}_\F^2)$ & entrywise density of $G$ \\
\bottomrule
\end{tabular}
\end{table}

\subsection{\UPDATE{Global Lower-Bound Ratios and Directional Scores}}
\label{app:notation_index}

The paper \UPDATE{distinguishes the ratios derived from global-smoothness lower bounds from the practical directional score} (main text, \secref{sec:theory}; Definition~\ref{def:advantage_index}):

\begin{table}[H]
\centering
\small
\begin{tabular}{@{}ll@{}}
\toprule
\textbf{Symbol} & \textbf{Meaning} \\
\midrule
  \UPDATE{$\Delta_g(W;\eta)$ } &   \UPDATE{actual expected decrease for geometry $g$ at step size $\eta$ } \\
  \UPDATE{$B_g(W,B)$ } &   \UPDATE{lower-bound quantity for the stated conservative step }\\
\UPDATE{$\RLB(W,B)$ }&  finite-batch geometry ratio \UPDATE{$B_{\mathrm{spec}}/B_{\mathrm{sign}}$} (Eq.~\eqref{eq:advantage_index}) \\
$\Rinft(W)$ & high-SNR \UPDATE{form of the global-smoothness ratio (Eq.~\eqref{eq:high_snr})} \\
$\Rzer(W):= \nucnorm{G}^2 / \onenorm{G}^2$ & curvature-free proxy (Eq.~\eqref{eq:is_def}) \\
$\Rhat(W)$ &   \UPDATE{empirical directional quadratic score } (Eq.~\eqref{eq:empirical_index}) \\
\bottomrule
\end{tabular}
\end{table}

  \UPDATE{In the positive-signal regime, $\RLB=\Rinft\cdot[\text{noise-correction factor}]$ and $\Rinft=\Rzer\cdot\kappa$. The directional score $\Rhat$ is not generally an estimator of either global-smoothness quantity.
}

\subsection{Activation-dependent stable rank}
\label{app:notation_activation}

\begin{table}[H]
\centering
\small
\begin{tabular}{@{}ll@{}}
\toprule
\textbf{Symbol} & \textbf{Meaning} \\
\midrule
$\sigma(\cdot)$ & pointwise activation function \\
$X_\ell$ & pre-activation at layer $\ell$: $X_\ell = W_\ell A_{\ell-1}$ \\
$A_\ell:= \sigma(X_\ell)$ & post-activation at layer $\ell$ \\
$\strank{A}:= \norm{A}_\F^2 / \specnorm{A}^2$ & stable rank of a matrix $A$ \\
$\effrank{A}$ & effective rank (alternative rank surrogate; see Appendix~\ref{app:activation_theory}) \\
$\ssigma:= \E[\sigma(\gamma)^2] / (\E[\sigma(\gamma)])^2$ & activation noise-to-signal ratio, $\gamma\sim\mathcal{N}(0,1)$ \\
$\NSR$ & noise-to-signal ratio (synonym for $\ssigma$ when context is clear) \\
\bottomrule
\end{tabular}
\end{table}

Proposition~\ref{prop:strank_activation}   \UPDATE{bounds $\strank{\sigma(WX)}$ by an activation moment ratio, subject to the stated random-feature assumptions. This controls the spectral--Frobenius feature-majorizer ratio only; it supplies no bound on the sign--spectral global ratio $\kappa=L_\infty/L_\op$}.

\subsection{Optimizers and methods}
\label{app:notation_methods}

The paper compares three primal geometries for matrix-valued parameters, two named single-geometry baselines, and the mixed-optimizer recipes built on top of them. The table below collects the abbreviations used in the main text and figures.

\begin{table}[H]
\centering
\small
\begin{tabular}{@{}>{\raggedright\arraybackslash}p{0.18\linewidth}>{\raggedright\arraybackslash}p{0.76\linewidth}@{}}
\toprule
\textbf{Name} & \textbf{Meaning} \\
\midrule
(S)GD & (stochastic) gradient descent; Frobenius-norm steepest descent \\
SignSGD, \SignSGD & sign-based steepest descent under entrywise $\ell_\infty$ geometry \\
SpecSGD, \SpecSGD & spectral steepest descent under operator-norm geometry; direction $\polar(\widehat{G})$ \\
Muon & practical spectral-geometry optimizer used as our SpecSGD instantiation \\
AdamW & Adam with decoupled weight decay (standard baseline) \\
Scion & fixed mixed recipe: SignSGD on embed / head, Muon on hidden layers \\
G-Scion & adaptive gate: \UPDATE{after forced warmup, an eligible layer may switch once after margin, EMA, and confirmation checks driven by $\Rhat$} \\
G-Muon & variant of G-Scion with AdamW (not SignSGD) as the ``sign side'' \\
\bottomrule
\end{tabular}
\end{table}

\subsection{Standard mathematical notation}
\label{app:notation_misc}

We follow standard probability and analysis conventions; symbols below are listed for completeness.

\begin{table}[H]
\centering
\small
\begin{tabular}{@{}ll@{}}
\toprule
\textbf{Symbol} & \textbf{Meaning} \\
\midrule
$\R, \mathbb{N}, \mathbb{Z}$ & real numbers, natural numbers, integers \\
$\E[\cdot]$ & expectation (over minibatch sampling unless specified otherwise) \\
$\Var[\cdot], \Cov[\cdot,\cdot]$ & variance, covariance \\
$\mathcal{N}(\mu, \Sigma)$ & Gaussian distribution with mean $\mu$ and covariance $\Sigma$ \\
$\mathbb{1}[\cdot]$ & indicator function \\
$(x)_+:= \max(x, 0)$ & non-negative part \\
$O(\cdot), \Theta(\cdot), o(\cdot)$ & standard asymptotic notation \\
$\log(\cdot)$ & natural logarithm unless otherwise stated \\
\bottomrule
\end{tabular}
\end{table}

\newpage
\section{Convergence Theory}
\label{app:convergence_theory}

\subsection{Smoothness Relations}
\label{app:smoothness}

We establish the relationships between the  \UPDATE{least admissible } norm-specific  \UPDATE{global } smoothness constants used throughout the paper. Recall that $f: \R^{m \times n} \to \R$ is $L_{\norm{\cdot}}$-smooth w.r.t.\ a norm $\norm{\cdot}$ if
$\norm{\nabla f(X) - \nabla f(Y)}_* \le L_{\norm{\cdot}} \norm{X - Y}$ for all $X, Y$.

\begin{proposition}[Norm-specific smoothness relations]
\label{prop:smoothness_relations}
  \UPDATE{Let $p=\min(m,n)$ and $d=mn$. Then
}\begin{align}
\UPDATE{L_\F }&\UPDATE{\le L_\op \le pL_\F, \label{eq:Lop_bound} }\\
\UPDATE{L_\op }&\UPDATE{\le L_\infty \le dL_\F. \label{eq:LF_bound}
}\end{align}
  \UPDATE{Consequently,
}\begin{equation}
\UPDATE{1\le\kappa:=\frac{L_\infty}{L_\op}\le mn.
}\end{equation}
 \end{proposition}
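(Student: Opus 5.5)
The plan is to prove each inequality by chaining elementary norm comparisons through the definition of smoothness. Write $\Delta=X-Y$ and $\Gamma=\nabla f(X)-\nabla f(Y)$. A constant $L$ is admissible for a norm if $\norm{\Gamma}_*\le L\norm{\Delta}$ for all $X,Y$, and the constants in the statement are the least admissible ones. So for each claimed inequality $L_a\le cL_b$, I will take the defining inequality of $L_b$ and convert it, using norm equivalences, into $\norm{\Gamma}_{a,*}\le cL_b\norm{\Delta}_a$. This shows that $cL_b$ is admissible for geometry $a$, and minimality of $L_a$ then gives the claim.

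The norm comparisons I need are standard. For the Frobenius, spectral and nuclear norms, using the singular values, I will use $\specnorm{A}\le\norm{A}_\F\le\sqrt{p}\specnorm{A}$ and $\norm{A}_\F\le\nucnorm{A}\le\sqrt{p}\norm{A}_\F$. For the entrywise norms, I will use $\norm{A}_\infty\le\norm{A}_\F\le\sqrt{d}\norm{A}_\infty$ and $\norm{A}_\F\le\onenorm{A}\le\sqrt{d}\norm{A}_\F$, where the second pair follows from Cauchy--Schwarz. Finally, I need the cross comparisons $\norm{A}_\infty\le\specnorm{A}$ and $\nucnorm{A}\le\onenorm{A}$. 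The first holds because $|A_{ij}|=|e_i^\top Ae_j|\le\specnorm{A}$. The second holds by the triangle inequality applied to $A=\sum_{ij}A_{ij}e_ie_j^\top$, since each rank-one term has nuclear norm $|A_{ij}|$.

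With these in hand, the four inequalities follow in order.
\begin{itemize}
\item $L_\F\le L_\op$: $\norm{\Gamma}_\F\le\nucnorm{\Gamma}\le L_\op\specnorm{\Delta}\le L_\op\norm{\Delta}_\F$.
\item $L_\op\le pL_\F$: $\nucnorm{\Gamma}\le\sqrt p\norm{\Gamma}_\F\le\sqrt pL_\F\norm{\Delta}_\F\le pL_\F\specnorm{\Delta}$.
\item $L_\op\le L_\infty$: $\nucnorm{\Gamma}\le\onenorm{\Gamma}\le L_\infty\norm{\Delta}_\infty\le L_\infty\specnorm{\Delta}$.
\item $L_\infty\le dL_\F$: $\onenorm{\Gamma}\le\sqrt d\norm{\Gamma}_\F\le\sqrt dL_\F\norm{\Delta}_\F\le dL_\F\norm{\Delta}_\infty$.
\end{itemize}

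For the consequence on $\kappa$, the lower bound $\kappa\ge1$ is exactly $L_\op\le L_\infty$. For the upper bound, combine $L_\infty\le dL_\F$ with $L_\F\le L_\op$ to get $L_\infty\le mnL_\op$.

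The only delicate point is not any inequality but well-definedness. The ratio $\kappa$ requires $L_\op>0$, which fails only when $\nabla f$ is constant, i.e.\ $f$ is affine. I will state that this case is excluded, consistent with the assumption $0<L_g<\infty$ in Proposition~\ref{prop:lower_bound_sandwich}. I will also note that finiteness of any one constant implies finiteness of the others by the same chains. Beyond that, the main step to verify carefully is the cross comparison $\nucnorm{A}\le\onenorm{A}$, which is what ties the sign and spectral geometries together.
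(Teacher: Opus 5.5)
Your proof is correct and follows the paper's approach. Both arguments convert the defining Lipschitz inequality of one geometry into an admissible constant for another by chaining dual-norm comparisons on $\Gamma$ with primal-norm comparisons on $\Delta$. The step $L_\op\le L_\infty$ is identical: $\nucnorm{\Gamma}\le\onenorm{\Gamma}\le L_\infty\norm{\Delta}_\infty\le L_\infty\specnorm{\Delta}$. Your version is more complete, since you write out all four chains and state two comparisons that the paper's sketch leaves implicit, namely $\onenorm{A}\le\sqrt{d}\,\norm{A}_\F$ and $\norm{A}_\F\le\sqrt{p}\,\specnorm{A}$.
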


\begin{proof}
  \UPDATE{Use the norm relations
}\[
\UPDATE{\norm{A}_\F\le\norm{A}_\nuc\le\sqrt p\,\norm{A}_\F,
\qquad
\norm{A}_\nuc\le\norm{A}_1,
}\]
\UPDATE{and
}\[
\UPDATE{\norm{M}_\infty\le\norm{M}_\op\le\norm{M}_\F
\le\sqrt{mn}\,\norm{M}_\infty.
}\]
\UPDATE{For example, $L_\op\le L_\infty$ follows from
}\[
\UPDATE{\norm{\Delta\nabla f}_\nuc
\le\norm{\Delta\nabla f}_1
\le L_\infty\norm{\Delta W}_\infty
\le L_\infty\norm{\Delta W}_\op.
}\]
  \UPDATE{Applying the remaining norm comparisons on the input and output sides gives $L_\F\le L_\op\le pL_\F$ and $L_\infty\le mnL_\F$.
}\end{proof}

  \UPDATE{The upper bound is dimensionally sharp: for $f(W)=\tfrac12\inner{J}{W}^2$ with $J$ } the   \UPDATE{all-ones matrix, $L_\infty/L_\op=mn$. Global smoothness depends on the Hessian of the objective, not on the rank of the current gradient}.

 \subsection{Proof of the Polar Misalignment Lemma}
\label{app:polar_proof}

The following lemma bounds the alignment loss when computing the polar factor from a noisy gradient.

\begin{lemma}[Polar misalignment]
\label{lem:polar_misalignment}
Let $G = U \Sigma V^\top$ be the SVD of the true gradient and $\widehat{G} = G + E$ the noisy observation.
Then:
\begin{equation}
\inner{G}{\polar(\widehat{G})} \ge \nucnorm{G} - 2\nucnorm{E}.
\end{equation}
\end{lemma}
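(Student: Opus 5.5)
The argument rests on two facts about the polar factor $P:=\polar(\widehat G)$: it maximizes the inner product with $\widehat G$ over the spectral-norm unit ball, and $\specnorm{P}\le 1$. We then transfer the alignment from $\widehat G$ back to $G$, paying once for the noise in each of two steps.

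\textbf{Step 1.} First we rewrite the target quantity by splitting $G=\widehat G-E$:
\[
\inner{G}{P}=\inner{\widehat G}{P}-\inner{E}{P}.
\]
Since $\widehat G=\widetilde U\widetilde\Sigma\widetilde V^\top$ and $P=\widetilde U\widetilde V^\top$, we have $\inner{\widehat G}{P}=\tr(\widetilde\Sigma)=\nucnorm{\widehat G}$. If $\widehat G$ is rank-deficient, we interpret the polar factor as the partial isometry on its range; the identity still holds.

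\textbf{Step 2.} Next we bound the noise term by trace duality. Because $\specnorm{P}\le 1$,
\[
|\inner{E}{P}|\le \specnorm{P}\,\nucnorm{E}\le\nucnorm{E}.
\]
This gives $\inner{G}{P}\ge\nucnorm{\widehat G}-\nucnorm{E}$.

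\textbf{Step 3.} Finally, the reverse triangle inequality for the nuclear norm gives $\nucnorm{\widehat G}=\nucnorm{G+E}\ge\nucnorm{G}-\nucnorm{E}$. Substituting this into Step 2 yields
\[
\inner{G}{\polar(\widehat G)}\ge\nucnorm{G}-2\nucnorm{E},
\]
which is the claim.

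There is no real obstacle. The only points needing care are the duality $|\inner{A}{B}|\le\specnorm{A}\nucnorm{B}$, which follows from von Neumann's trace inequality, and the convention for $\polar$ when $\widehat G$ has zero singular values. In that case any choice of $P$ with $\specnorm{P}\le1$ satisfying $\inner{\widehat G}{P}=\nucnorm{\widehat G}$ works, since Step 1 uses nothing else.

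For the expectation form used in Eq.~\eqref{eq:spec_gain}, we take the conditional expectation given $W$ on both sides. This is valid because $G$ is fixed given $W$, and it gives $\mu_{\mathrm{spec}}\ge\nucnorm{G}-2\nu_\nuc(W)$.
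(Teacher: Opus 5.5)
Your proof is correct and matches the paper's argument step for step: write $\inner{G}{P}=\nucnorm{\widehat G}-\inner{E}{P}$, bound the noise term by trace duality using $\specnorm{P}\le 1$, and apply the reverse triangle inequality for the nuclear norm. The paper also records the maximality inequality $\nucnorm{\widehat G}\ge\inner{\widehat G}{UV^\top}$, but it is never used. Your use of $\specnorm{P}\le 1$ rather than $=1$ additionally covers the degenerate case $\widehat G=0$.
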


\begin{proof}
Write $\polar(\widehat{G}) = \widehat{U}\widehat{V}^\top$ where $\widehat{G} = \widehat{U}\widehat{\Sigma}\widehat{V}^\top$ is the SVD of $\widehat{G}$.
By the definition of the polar factor as the maximizer of the inner product over the set of partial isometries:
\begin{equation}
\inner{\widehat{G}}{\polar(\widehat{G})} = \nucnorm{\widehat{G}} \ge \inner{\widehat{G}}{UV^\top} = \inner{G}{UV^\top} + \inner{E}{UV^\top}.
\end{equation}
Thus:
\begin{align}
\inner{G}{\polar(\widehat{G})} &= \inner{\widehat{G}}{\polar(\widehat{G})} - \inner{E}{\polar(\widehat{G})} \\
&= \nucnorm{\widehat{G}} - \inner{E}{\widehat{U}\widehat{V}^\top}.
\end{align}

Now, by the triangle inequality for nuclear norms:
\[
\nucnorm{\widehat{G}} = \nucnorm{G + E} \ge \nucnorm{G} - \nucnorm{E}.
\]
Also, $|\inner{E}{\widehat{U}\widehat{V}^\top}| \le \nucnorm{E} \cdot \specnorm{\widehat{U}\widehat{V}^\top} = \nucnorm{E}$,
since $\specnorm{\widehat{U}\widehat{V}^\top} = 1$ (it is a partial isometry).
Combining:
\begin{align}
\inner{G}{\polar(\widehat{G})} &\ge \nucnorm{G} - \nucnorm{E} - \nucnorm{E} = \nucnorm{G} - 2\nucnorm{E}.
\end{align}
\end{proof}

This bound is used in deriving the SpecSGD expected improvement (Eq.~\ref{eq:spec_gain} in the main text).
The factor of two arises because noise affects both the choice of direction (through the polar factor) and the inner product evaluation.

\subsection{  \UPDATE{Ideal Arbitrary-Switching } Norm-Steepest Descent}
\label{app:convergence}

We   \UPDATE{record a convergence guarantee for an idealized deterministic method that may choose either the sign or spectral geometry at every iteration. The choice is arbitrary; the result does not require a gate or a } one-step   \UPDATE{score.
}

  \begin{theorem}[\UPDATE{Ideal arbitrary-switching descent and stationarity}]
\label{thm:ideal_arbitrary_switching}
\UPDATE{Let $f:\R^{m\times n}\to\R$ be differentiable and bounded below by $f_{\inf}>-\infty$. Assume that $f$ is $L_\infty$-smooth with respect to the entrywise $\ell_\infty$ norm and $L_\op$-smooth with respect to the spectral norm. At iteration $t$, choose arbitrarily
}\[
\UPDATE{(\norm{\cdot}_t,\norm{\cdot}_{t,*},L_t)
\in
\left\{
(\norm{\cdot}_\infty,\onenorm{\cdot},L_\infty),
(\specnorm{\cdot},\nucnorm{\cdot},L_\op)
\right\}.
}\]
  \UPDATE{Let $G_t=\nabla f(W_t)$ and define
}\[
\UPDATE{D_t\in\argmax_{\norm{D}_t\le1}\inner{G_t}{D},
\qquad
\eta_t:=\frac{\norm{G_t}_{t,*}}{L_t},
\qquad
W_{t+1}:=W_t-\eta_tD_t.
}\]
\UPDATE{Then, for every } $t$,
  \begin{equation}
\UPDATE{f(W_{t+1})
\le
f(W_t)-\frac{\norm{G_t}_{t,*}^{2}}{2L_t}.
\label{eq:ideal_switch_descent}
}\end{equation}
  \UPDATE{Consequently, } with   \UPDATE{$L_{\max}:=\max\{L_\infty,L_\op\}$, for every $T\ge1$,
}\begin{align}
\UPDATE{\sum_{t=0}^{T-1}\norm{\nabla f(W_t)}_\F^2
}&\UPDATE{\le
2L_{\max}\bigl(f(W_0)-f_{\inf}\bigr),
\label{eq:ideal_switch_sum}
}\\
\UPDATE{\min_{0\le t<T}\norm{\nabla f(W_t)}_\F^2
}&\UPDATE{\le
\frac{2L_{\max}\bigl(f(W_0)-f_{\inf}\bigr)}{T}.
\label{eq:ideal_switch_stationarity}
}\end{align}
  \UPDATE{In particular, $\norm{\nabla f(W_t)}_\F\to0$ as $t\to\infty$.
} \end{theorem}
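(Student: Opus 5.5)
The plan has three parts: a one-step descent inequality for whichever geometry is chosen, a comparison of each dual norm with the Frobenius norm, and a telescoping sum.

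\textbf{Per-step descent.} The first step is the standard descent lemma for a norm-smooth function. If $f$ is $L$-smooth with respect to $\norm{\cdot}$, meaning $\norm{\nabla f(X)-\nabla f(Y)}_*\le L\norm{X-Y}$, then integrating the gradient along the segment and applying the generalized Cauchy--Schwarz inequality $\inner{A}{B}\le\norm{A}_*\norm{B}$ gives
\[
f(Y)\le f(X)+\inner{\nabla f(X)}{Y-X}+\tfrac{L}{2}\norm{Y-X}^2 .
\]
At iteration $t$, apply this with whichever pair $(\norm{\cdot}_t,L_t)$ was chosen, taking $Y=W_t-\eta_tD_t$. Since $D_t$ is a maximizer over the unit ball, we have $\norm{D_t}_t\le1$, and by duality $\inner{G_t}{D_t}=\norm{G_t}_{t,*}$. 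Therefore
\[
f(W_{t+1})\le f(W_t)-\eta_t\norm{G_t}_{t,*}+\tfrac{L_t}{2}\eta_t^2 .
\]
Substituting $\eta_t=\norm{G_t}_{t,*}/L_t$ gives \eqref{eq:ideal_switch_descent}. Each iteration uses only the smoothness assumption for its own geometry, so choosing the geometry arbitrarily causes no difficulty.

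\textbf{From dual norms to the Frobenius norm.} Next, I compare each dual norm with the Frobenius norm. The norm relations quoted in the proof of Proposition~\ref{prop:smoothness_relations} give $\onenorm{G}\ge\nucnorm{G}\ge\norm{G}_\F$. Hence, whichever geometry is chosen,
\[
\frac{\norm{G_t}_{t,*}^2}{2L_t}\ge\frac{\norm{G_t}_\F^2}{2L_{\max}} .
\]
(Proposition~\ref{prop:smoothness_relations} in fact shows $L_{\max}=L_\infty$, but the argument does not need this.)

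\textbf{Telescoping.} Summing the per-step inequality over $t=0,\dots,T-1$ gives
\[
\sum_{t=0}^{T-1}\frac{\norm{G_t}_\F^2}{2L_{\max}}\le f(W_0)-f(W_T)\le f(W_0)-f_{\inf},
\]
which is \eqref{eq:ideal_switch_sum}. Bounding the minimum by the average yields \eqref{eq:ideal_switch_stationarity}. Letting $T\to\infty$, the series $\sum_t\norm{\nabla f(W_t)}_\F^2$ is finite, so its terms tend to zero and $\norm{\nabla f(W_t)}_\F\to0$.

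\textbf{Main obstacle.} The only step needing real care is the descent lemma under a non-Euclidean norm. It requires the dual-norm Lipschitz definition of smoothness stated in Appendix~\ref{app:smoothness}, together with the pairing inequality $\inner{A}{B}\le\norm{A}_*\norm{B}$ used inside the integral. A minor point is that the argmax sets may contain several maximizers (for example, $\sign(G)$ when some entries vanish, or a non-unique polar factor when $G$ is rank-deficient). This does not matter, because only the two properties $\norm{D_t}_t\le1$ and $\inner{G_t}{D_t}=\norm{G_t}_{t,*}$ are used.
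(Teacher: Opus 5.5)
Your proposal is correct and follows essentially the same route as the paper's proof. Both arguments apply the norm-specific descent inequality with $\norm{D_t}_t\le1$ and $\inner{G_t}{D_t}=\norm{G_t}_{t,*}$, compare each dual norm with the Frobenius norm via $\onenorm{G_t}\ge\nucnorm{G_t}\ge\norm{G_t}_\F$ together with $L_t\le L_{\max}$, and telescope against $f_{\inf}$; your derivation of the descent lemma from the dual-norm Lipschitz condition and your remark on non-unique maximizers add details the paper leaves implicit.
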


\begin{proof}
  \UPDATE{Norm-specific smoothness gives
}\[
\UPDATE{f(W_t-\eta D_t)
\le
f(W_t)-\eta\inner{G_t}{D_t}
+\frac{L_t\eta^2}{2}\norm{D_t}_t^2.
}\]
By   \UPDATE{construction, $\norm{D_t}_t\le1$ and $\inner{G_t}{D_t}=\norm{G_t}_{t,*}$. Substituting $\eta=\eta_t$ proves Eq.~\eqref{eq:ideal_switch_descent}. For both candidate geometries,
}\[
\UPDATE{\onenorm{G_t}\ge\norm{G_t}_\F,
\qquad
\nucnorm{G_t}\ge\norm{G_t}_\F.
}\]
  \UPDATE{Since $L_t\le L_{\max}$, summing Eq.~\eqref{eq:ideal_switch_descent} and using $f(W_T)\ge f_{\inf}$ proves Eqs.~\eqref{eq:ideal_switch_sum}--\eqref{eq:ideal_switch_stationarity}. Finiteness of the infinite sum of squared gradient norms gives the final claim.
} \end{proof}

  \begin{remark}[\UPDATE{Scope of the result}]
\label{rem:ideal_switch_scope}
  \UPDATE{Theorem~\ref{thm:ideal_arbitrary_switching} concerns an ideal deterministic single-block update with exact norm-steepest directions and geometry-specific step sizes $\eta_t=\norm{G_t}_{t,*}/L_t$. It does not analyze the implemented G-Scion gate, which uses stochastic gradient and curvature estimates, finite decision cadence, one-way locking, shared learning-rate schedules, approximate polar directions, and simultaneous updates of coupled parameter blocks. The theorem therefore does not imply that the gate selects the better realised update, that G-Scion dominates either fixed optimizer, or that the per-layer decisions are jointly optimal}.
\end{remark}

\section{\texorpdfstring{$\Rzer$}{R0} Approximation Bounds}
\label{app:is_bounds}

We derive upper and lower bounds on the \UPDATE{finite-batch geometry ratio} $\RLB(W,B)$ (Eq.~\ref{eq:advantage_index}) in terms of the curvature-free proxy $\Rzer$ (Eq.~\ref{eq:is_def}).

\subsection{Exact Decomposition of \texorpdfstring{$\RLB$}{R-LB}}

\subsubsection{Exact decomposition}
\label{app:exact_decomp}

Define the relative noise ratios:
\begin{equation}
\alpha:= \frac{2c_\nuc}{\sqrt{B}\,\nucnorm{G}}, \qquad
\beta:= \frac{2c_1}{\sqrt{B}\,\onenorm{G}},
\label{eq:relative_noise}
\end{equation}
and the curvature ratio $\kappa:= L_\infty / L_\op$.

\begin{proposition}[Exact decomposition]
\label{prop:decomposition}
Assuming $0 \le \alpha, \beta < 1$ (signal dominates noise):
\begin{equation}
\RLB(W, B) \;=\; \Rzer(G)
\;\cdot\; \underbrace{\left(\frac{1-\alpha}{1-\beta}\right)^{\!2}}_{\text{noise correction}}
\;\cdot\; \underbrace{\kappa\vphantom{\left(\frac{1}{1}\right)}}_{\text{curvature ratio}}.
\label{eq:exact_decomposition}
\end{equation}
\end{proposition}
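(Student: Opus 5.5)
The identity is pure algebra starting from Definition~\ref{def:advantage_index}: factor each noisy numerator term so that the clean norm appears in front, then read off the three factors.

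\textbf{Step 1: both positive parts are strictly positive.} Using the relative noise ratios in Eq.~\eqref{eq:relative_noise}, rewrite the two terms as
\[
\nucnorm{G}-\frac{2c_\nuc}{\sqrt B}=\nucnorm{G}\,(1-\alpha),
\qquad
\onenorm{G}-\frac{2c_1}{\sqrt B}=\onenorm{G}\,(1-\beta).
\]
Implicitly $G\neq 0$, since otherwise $\alpha$ and $\beta$ are undefined. Since $0\le\alpha,\beta<1$, both right-hand sides are strictly positive. Hence the positive parts $(\cdot)_+$ act as the identity. In particular $B_{\mathrm{sign}}>0$, so the ratio in Eq.~\eqref{eq:advantage_index} is well defined; this is exactly the proviso stated after Definition~\ref{def:advantage_index}.

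\textbf{Step 2: substitute and regroup.} Plugging these expressions into the signal-efficiency factor gives
\[
\left(\frac{\nucnorm{G}(1-\alpha)}{\onenorm{G}(1-\beta)}\right)^{2}
=\frac{\nucnorm{G}^2}{\onenorm{G}^2}\left(\frac{1-\alpha}{1-\beta}\right)^{2}.
\]
The first factor is $\Rzer(G)$ by Eq.~\eqref{eq:is_def}. Multiplying by the curvature factor $\Linf/\Lop=\kappa$ yields Eq.~\eqref{eq:exact_decomposition}.

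\textbf{Consistency check.} The result agrees with the relations $\RLB=\Rinft\cdot[\text{noise correction}]$ and $\Rinft=\Rzer\kappa$ stated in the main text.

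There is no real obstacle here. The only point requiring care is Step 1: checking that the hypothesis $\alpha,\beta<1$ is exactly what removes the positive parts and guarantees a nonzero denominator. I would state this explicitly rather than leave it implicit.
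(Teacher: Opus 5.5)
Your proposal is correct and follows essentially the same route as the paper's proof: factor the noise-corrected norms as $\nucnorm{G}(1-\alpha)$ and $\onenorm{G}(1-\beta)$, substitute into Eq.~\eqref{eq:advantage_index}, and identify $\nucnorm{G}^2/\onenorm{G}^2$ as $\Rzer$. The paper leaves implicit that $\alpha,\beta<1$ (with $G\neq0$) makes the positive parts inactive and the ratio well defined; your Step~1 makes this explicit.
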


\begin{proof}
Factor the noise-corrected norms as
$\nucnorm{G} - 2c_\nuc/\sqrt{B} = \nucnorm{G}(1 - \alpha)$ and
$\onenorm{G} - 2c_1/\sqrt{B} = \onenorm{G}(1 - \beta)$.
Substituting into Eq.~\eqref{eq:advantage_index}:
\[
\RLB(W,B)
= \left(\frac{\nucnorm{G}}{\onenorm{G}}\right)^{\!2}
  \left(\frac{1-\alpha}{1-\beta}\right)^{\!2} \kappa.
\]
From Eq.~\eqref{eq:is_def},
$\nucnorm{G}/\onenorm{G} = \sqrt{\Rzer}$,
giving~\eqref{eq:exact_decomposition}.
\end{proof}

This decomposition reveals a three-factor structure: $\Rzer$ captures gradient geometry, $\alpha$ and $\beta$ encode finite-batch noise, and $\kappa$   \UPDATE{is the ratio of global norm-smoothness constants}.

\subsubsection{High-SNR approximation}
\label{app:high_snr}

When the signal dominates noise ($\alpha \ll 1$ and $\beta \ll 1$), the noise correction factor satisfies
$\bigl(\frac{1-\alpha}{1-\beta}\bigr)^2 \approx 1$,
so:
\begin{equation}
\RLB(W, B) \;\approx\; \Rzer(G)\;\kappa.
\label{eq:high_snr_app}
\end{equation}
If the curvature ratio is also approximately unity ($\kappa \approx 1$):
\begin{equation}
\RLB(W, B) \;\approx\; \Rzer(G).
\label{eq:Is_limit}
\end{equation}
This is the regime in which $\Rzer$ \UPDATE{numerically approximates the global-smoothness geometry ratio. If $\kappa$ is merely common rather than close to one,} $\Rzer$ \UPDATE{preserves rankings but not magnitudes}.

\subsubsection{Upper and lower bounds}
\label{app:simple_bounds}

Since $0 \le \alpha, \beta < 1$, we have $1-\beta \le 1$ and hence $1/(1-\beta)^2 \ge 1$, giving:
\[
(1-\alpha)^2
\;\le\;
\left(\frac{1-\alpha}{1-\beta}\right)^{\!2}
\;\le\;
\frac{1}{(1-\beta)^2}.
\]
Combining with the exact decomposition:

\begin{corollary}[Upper and lower bounds]
\label{cor:bounds}
\begin{equation}
\kappa\,(1-\alpha)^2\,\Rzer
\;\;\le\;\;
\RLB(W, B)
\;\;\le\;\;
\frac{\kappa\,\Rzer}{(1-\beta)^2}.
\label{eq:Is_bounds}
\end{equation}
\end{corollary}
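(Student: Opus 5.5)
The plan is to combine the exact decomposition of Proposition~\ref{prop:decomposition} with elementary bounds on the noise-correction factor. Under the standing assumption $0\le\alpha,\beta<1$, Eq.~\eqref{eq:exact_decomposition} gives
\[
\RLB(W,B)=\kappa\,\Rzer\left(\frac{1-\alpha}{1-\beta}\right)^{2}.
\]
Here $\kappa>0$ by Proposition~\ref{prop:smoothness_relations}, and $\Rzer\ge0$. It therefore suffices to bound the factor $\bigl((1-\alpha)/(1-\beta)\bigr)^2$ above and below. Multiplying the resulting inequalities by the nonnegative quantity $\kappa\,\Rzer$ preserves their direction.

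For the lower bound, I will use $0<1-\beta\le1$, which holds because $\beta\ge0$ and $\beta<1$. Together with $1-\alpha>0$, this gives $(1-\alpha)/(1-\beta)\ge 1-\alpha\ge0$. Squaring is monotone on nonnegative reals, so $\bigl((1-\alpha)/(1-\beta)\bigr)^2\ge(1-\alpha)^2$.

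For the upper bound, I will use $0<1-\alpha\le1$, which holds because $\alpha\ge0$. This gives $(1-\alpha)/(1-\beta)\le 1/(1-\beta)$, and both sides are nonnegative. Squaring then yields $\bigl((1-\alpha)/(1-\beta)\bigr)^2\le 1/(1-\beta)^2$.

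Multiplying both inequalities by $\kappa\,\Rzer\ge0$ and substituting into the decomposition gives Eq.~\eqref{eq:Is_bounds}.

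No step is a real obstacle. The only care needed is in the sign bookkeeping: nonnegativity of all factors justifies squaring and multiplying, and strict positivity of $1-\beta$ ensures the denominator is well defined. Both follow from the hypothesis $0\le\alpha,\beta<1$ inherited from Proposition~\ref{prop:decomposition}. Under that hypothesis the positive parts in Eq.~\eqref{eq:advantage_index} are inactive, so the decomposition applies verbatim.
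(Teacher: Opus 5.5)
Your proposal is correct and follows the same route as the paper. You substitute the exact decomposition of Proposition~\ref{prop:decomposition}, sandwich the noise factor $\bigl((1-\alpha)/(1-\beta)\bigr)^2$ between $(1-\alpha)^2$ and $1/(1-\beta)^2$ using $0<1-\alpha\le 1$ and $0<1-\beta\le 1$, and multiply by $\kappa\,\Rzer\ge 0$; your sign bookkeeping, including why the positive parts in $\RLB$ are inactive, is just slightly more explicit than the paper's one-line version.
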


\paragraph{Reading the bounds.}
\UPDATE{Relative to $\kappa\Rzer$, the noise factor $((1-\alpha)/(1-\beta))^2$ is below one exactly when $\alpha>\beta$. A large spectral-side relative noise $\alpha$ shrinks the numerator, but does not alone imply $\RLB<\Rzer$; that comparison also depends on $\beta$ and $\kappa$. As the sign-side relative noise $\beta$ approaches $1$, the denominator can nearly vanish and the factor can diverge.}

\paragraph{Curvature-free bounds.}
The norm equivalences from Proposition~\ref{prop:smoothness_relations} yield:
  \begin{equation}
\UPDATE{1 \;\le\; \kappa = \frac{L_\infty}{L_\op} \;\le\; mn.
\label{eq:kappa_bounds}
}\end{equation}
Substituting into~\eqref{eq:Is_bounds} eliminates the curvature ratio entirely:
  \begin{equation}
\UPDATE{(1-\alpha)^2\,\Rzer
\;\;\le\;\;
\RLB(W, B)
\;\;\le\;\;
\frac{mn\,\Rzer}{(1-\beta)^2}.
\label{eq:curvature_free_bounds}
}\end{equation}
In the noiseless limit ($\alpha, \beta \to 0$):
  \begin{equation}
\UPDATE{\Rzer
\;\le\;
\RLB(W, B)
\;\le\;
mn\,\Rzer.
\label{eq:noiseless_bounds}
}\end{equation}
The upper bound shows that \UPDATE{$\Rzer>1/(mn)$ is necessary for $\RLB>1$ in the noiseless regime. Because $\Rzer\le1$, norm equivalence alone supplies no nontrivial curvature-free sufficient condition that is uniformly strict. These inequalities compare the lower-bound quantities, not realised optimizer decreases}.

\subsubsection{First-order noise correction}
\label{app:first_order}

For $\alpha, \beta \ll 1$, a Taylor expansion gives:
\[
\frac{1-\alpha}{1-\beta}
= (1-\alpha)(1+\beta+O(\beta^2))
= 1 - (\alpha - \beta) + O(\alpha\beta + \beta^2),
\]
so:
  \begin{equation}
\RLB(W, B)
\;\approx\;
\kappa\,\Rzer
\,\bigl(1 - 2(\alpha - \beta)\bigr).
\label{eq:first_order}
\end{equation}

\paragraph{Reading the first-order correction.}
When   \UPDATE{$\alpha>\beta$}, the spectral-side relative noise exceeds the sign-side and $\RLB$ falls below the noiseless proxy   \UPDATE{$\kappa\Rzer$; when $\beta>\alpha$ the situation reverses}.  \UPDATE{No universal monotonic dependence on batch size follows without an assumption on the ordering and batch dependence of these two relative noise terms.
}

\subsubsection{When is \texorpdfstring{$\Rzer$}{R0} alone sufficient?}
\label{app:validity}

The proxy $\Rzer$   \UPDATE{numerically approximates } $\RLB$ when   \UPDATE{the following } conditions hold:
\begin{enumerate}
\item \textbf{High SNR in both geometries:}
$\sqrt{B}\,\nucnorm{G} \gg 2c_\nuc$ and $\sqrt{B}\,\onenorm{G} \gg 2c_1$
(i.e., $\alpha \ll 1$ and $\beta \ll 1$).
\item \textbf{Comparable norm-specific smoothness constants:}
$L_\infty \approx L_\op$
(i.e., $\kappa \approx 1$).
 \end{enumerate}
Under these conditions, \UPDATE{$\RLB(W,B)\approx\Rzer(G)$ as a lower-bound ratio}.

Conversely, $\Rzer$ alone can be misleading when:
\begin{itemize}
\item the batch size is small (large $\alpha$ or $\beta$), so that noise corrections dominate the signal;
\item the gradient signal is weak (small $\nucnorm{G}$ or $\onenorm{G}$);
\item the curvature asymmetry is large (\UPDATE{$\kappa\gg1$}).
\end{itemize}

\paragraph{Three-factor decomposition.}
Eq.~\eqref{eq:exact_decomposition} decomposes $\RLB$ into three factors:
  \[
\UPDATE{\RLB(W, B)
= \underbrace{\Rzer(G)}_{\text{gradient structure}}
\;\cdot\; \underbrace{\left(\frac{1-\alpha}{1-\beta}\right)^{\!2}}_{\text{finite-batch noise}}
\;\cdot\; \underbrace{\kappa}_{\text{global smoothness ratio}}.
}\]
  \UPDATE{Within the global lower-bound model,} discrepancy between $\Rzer$ \UPDATE{and $\RLB$ is attributable to the relative noise terms, the global curvature ratio, or both. Discrepancy with realised optimizer outcomes can additionally arise from bound slack and mechanisms outside the model.
}

\subsection{\UPDATE{Controlled small-scale validation: input-rank intervention}}
\label{app:exp4_kappa}

  \UPDATE{We use a controlled } toy NanoGPT ($n_\text{layer}=4$, $n_\text{embd}=128$, vocab $256$, GELU activation)   \UPDATE{to test whether retaining measured directional curvature improves on the curvature-free proxy. The input-rank intervention changes both gradient structure and directional curvature, so this experiment is a controlled curvature sweep but not a proof that $\kappa$ varies in isolation.
}

\paragraph{  \UPDATE{Directional-curvature intervention}.}
We attach a forward pre-hook to a single layer (\texttt{block0.\allowbreak mlp.\allowbreak c\_fc}) that left-multiplies its input by a fixed orthogonal projection $P_r = Q_r Q_r^\top$ of rank $r$, where $Q_r \in \R^{n_\text{embd}\times r}$ has orthonormal columns drawn at initialisation.
\UPDATE{The hook leaves the layer's weight tensor and parameterization unchanged while constraining the effective input subspace to dimension $r$.}
Sweeping $r \in \{2, 4, 8, 16, 32, 64, 128\}$ (with $r=128$ the unconstrained control) moves the \UPDATE{measured directional-curvature ratio} $\hat\kappa_{\mathrm{dir}}$ over a $\sim\!5\times$ range without modifying the optimizer, the loss, or the activation function.

\paragraph{Per-trial measurement.}
At each measurement point $(r, t, \text{seed})$ and on a fresh probe batch $B$ we compute, all at the same $W$:
(i) $\Rzer = (\nucnorm{G}/\onenorm{G})^2$ from the layer-block gradient $G$;
(ii) two unit-norm descent directions $D_\text{spec} = -\polar(G)$ ($\norm{D_\text{spec}}_\op = 1$) and $D_\text{sign} = -\sign(G)$ ($\norm{D_\text{sign}}_\infty = 1$);
(iii) directional curvatures $C_\text{spec} = \langle D_\text{spec}, H D_\text{spec}\rangle$ and $C_\text{sign} = \langle D_\text{sign}, H D_\text{sign}\rangle$ via single Hessian-vector products through the layer block;
(iv) the \UPDATE{directional curvature ratio $\hat\kappa_\mathrm{dir}:= C_\text{sign}/C_\text{spec}$ and deterministic quadratic score $R_\mathrm{quad}:= \Rzer\cdot\hat\kappa_\mathrm{dir}$;
(v) the directional quadratic-model } step sizes $\eta_\text{spec} = \nucnorm{G}/C_\text{spec}$ and $\eta_\text{sign} = \onenorm{G}/C_\text{sign}$ \UPDATE{when $C_\text{spec},C_\text{sign}>0$};
(vi) the empirical one-step decreases $\Delta_\text{spec} = f(W;B) - f(W + \eta_\text{spec} D_\text{spec};B)$ and $\Delta_\text{sign}$ by the analogous step on the same batch with all other parameters held fixed.
\UPDATE{Here $R_\mathrm{quad}$ is the deterministic same-batch analogue of Eq.~\eqref{eq:empirical_index}, whereas $\Rhat$ uses stochastic averaged probes during training. When $\Delta_\text{spec},\Delta_\text{sign}>0$, we define $R_\mathrm{emp}:= \Delta_\text{spec}/\Delta_\text{sign}$ and report log-ratios. All winner statements below are restricted to these valid, same-batch, geometry-specific probe steps; outside this regime the ratio is left undefined.}

\paragraph{\UPDATE{Empirical directional-curvature sweep.}}
Table~\ref{tab:exp4_kappa} reports geometric means over $24$ trials per cell ($3$ seeds $\times\,8$ probe batches) at training step $200$. The curvature-free proxy $\Rzer$ \UPDATE{predicts a sign-side win} at every $r$\UPDATE{, while the spectral probe step has the larger positive same-batch decrease at every reported point. The deterministic score $R_\mathrm{quad}=\Rzer\hat\kappa_\mathrm{dir}$ tracks the empirical one-step ratio much more closely; the same qualitative} pattern holds at the other measurement steps and across all three seeds (Fig.~\ref{fig:exp4_kappa_rank_sweep}). \UPDATE{This is empirical calibration in this controlled setting, not an error bound that transfers to the LM/VLM gates.
}

\begin{table}[ht]
\centering
\small
\caption{Toy \UPDATE{directional-curvature sweep} at \texttt{block0.\allowbreak mlp.\allowbreak c\_fc}, step $200$, GELU, $24$ trials per row (geometric means). \UPDATE{$R_\mathrm{quad}/R_\mathrm{emp}\in[1.01,1.31]$} across the \UPDATE{aggregate rows}; $\Rzer$ disagrees in sign at every $r$.}
\label{tab:exp4_kappa}
\begin{tabular}{@{}rrrrr@{}}
\toprule
$r$ & \UPDATE{$\hat\kappa_\mathrm{dir}$} & $\Rzer$ & \UPDATE{$R_\mathrm{quad}$} & $R_\mathrm{emp}$ \\
\midrule
$2$    & $20{,}100$ & $5.7\!\cdot\!10^{-5}$ & $1.14$ & $1.13$ \\
$4$    & $14{,}100$ & $9.6\!\cdot\!10^{-5}$ & $1.36$ & $1.33$ \\
$8$    & $13{,}700$ & $1.5\!\cdot\!10^{-4}$ & $2.04$ & $2.03$ \\
$16$   & $\phantom{1}8{,}100$  & $2.9\!\cdot\!10^{-4}$ & $2.31$ & $2.24$ \\
$32$   & $\phantom{1}7{,}000$  & $4.7\!\cdot\!10^{-4}$ & $3.27$ & $3.09$ \\
$64$   & $\phantom{1}5{,}200$  & $7.9\!\cdot\!10^{-4}$ & $4.09$ & $3.54$ \\
$128$  & $\phantom{1}4{,}200$  & $1.2\!\cdot\!10^{-3}$ & $4.92$ & $3.77$ \\
\bottomrule
\end{tabular}
\end{table}

\begin{figure}[htp]
\centering
\includegraphics[width=0.8\linewidth]{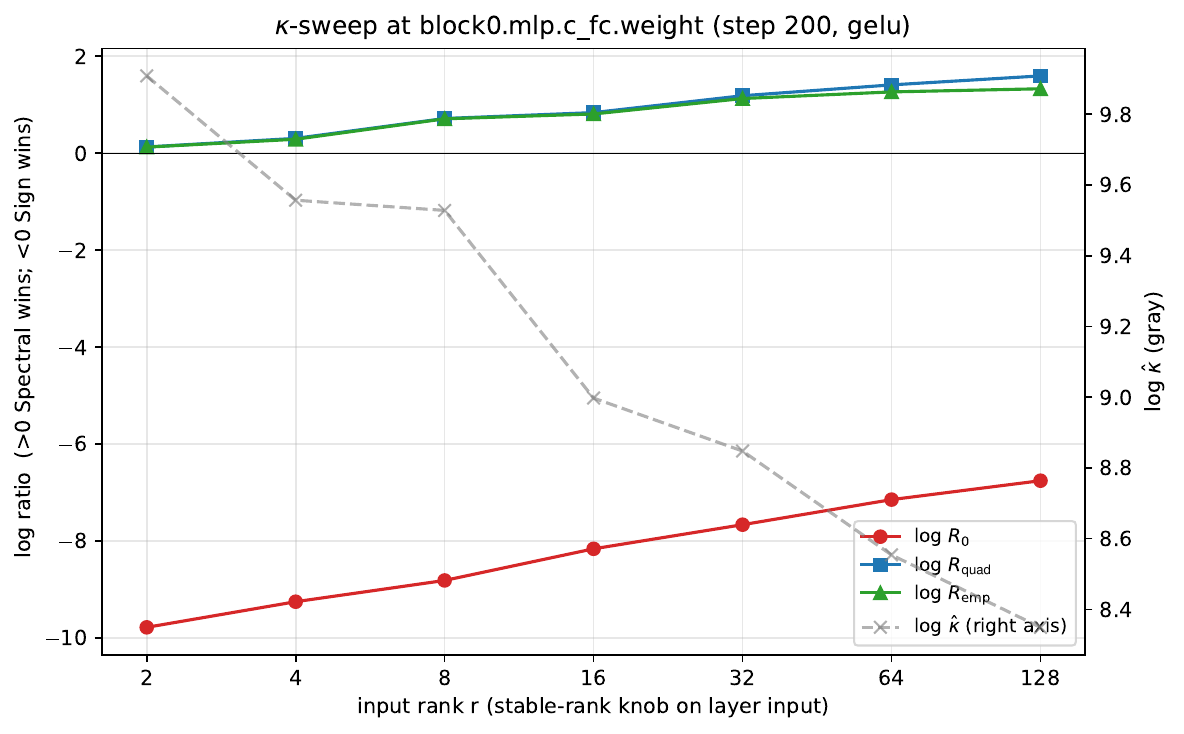}
\caption{\UPDATE{\textbf{Toy directional-curvature sweep at \texttt{block0.\allowbreak mlp.\allowbreak c\_fc} (step $200$, GELU, $24$ trials per point).}} As input rank $r$ varies \UPDATE{$2\to128$}, the directional curvature ratio \UPDATE{$\hat\kappa_\mathrm{dir}$} (gray, right axis) moves over $\sim\!5\times$. The \UPDATE{deterministic score $R_\mathrm{quad}$} (blue) and empirical one-step ratio $R_\mathrm{emp}$ (green) move together. The curvature-free proxy $\Rzer$ (red) predicts \UPDATE{the wrong side} throughout. The \UPDATE{intervention also changes $\Rzer$, so the figure demonstrates the empirical value of retaining directional curvature rather than varying curvature alone}.}
\label{fig:exp4_kappa_rank_sweep}
\end{figure}

Across $504$ controlled trials\UPDATE{, retaining the measured directional-curvature factor yields the reported qualitative agreement, whereas the curvature-free proxy selects the wrong side at every reported point. The range $[1.01,1.31]$ refers to the aggregate rows in Table~\ref{tab:exp4_kappa}, not a universal or per-trial error guarantee}.

\section{Activation-Dependent Stable Rank Theory}
\label{app:activation_theory}

This section   \UPDATE{states precisely what the } stable-rank framework of \citet{DavisDrusvyatskiy2025SpecGD}  \UPDATE{does and does not imply for our sign--spectral comparison. Their result directly concerns feature-based spectral--Frobenius quadratic majorization; we use it as architectural motivation, not as a theorem determining $L_\infty/L_\op$}.

\begin{table}[ht]
\centering
\caption{  \UPDATE{Unit-scale activation moment ratios. For non-homogeneous SiLU } and   \UPDATE{GELU these values are descriptive and are not the scale-uniform constants required by the full theorem}.}
\label{tab:nsr}
\small
  \begin{tabular}{lcc}
 \toprule
\textbf{Activation} $\sigma(t)$ &  \UPDATE{Unit-scale } $\ssigma$ &   \UPDATE{Scope } \\
\midrule
ReLU $\max(0,t)$ &   \UPDATE{$\pi\approx3.14$ } &   \UPDATE{homogeneous corollary } \\
SiLU $t(1+e^{-t})^{-1}$ &   \UPDATE{$\approx8.33$ } &   \UPDATE{descriptive; non-homogeneous } \\
GELU $t\,\Phi(t)$ &   \UPDATE{$\approx5.34$ } &   \UPDATE{descriptive; non-homogeneous } \\
ReLU$^2$ $(\max(0,t))^2$ &   \UPDATE{$6$ } &   \UPDATE{homogeneous corollary } \\
\bottomrule
\end{tabular}
 \end{table}

\subsection{ Stable Rank   \UPDATE{in a Shared Spectral--Frobenius Majorizer} }

Consider a layered model where layer $\ell$ computes $X_\ell = W_\ell A_{\ell-1}$ with post-activation matrix $A_{\ell-1} = \sigma(X_{\ell-1})$.
The   \UPDATE{shared quadratic upper model of \citet[Sec.~7]{DavisDrusvyatskiy2025SpecGD} uses two admissible feature-based coefficients:
}\begin{align}
\UPDATE{\widetilde L_\ell^{\F} }&\UPDATE{:= C_F(W) \norm{A_{\ell-1}}_{\op}^2, }&
\UPDATE{\widetilde L_\ell^{\op} }&\UPDATE{:= C_F(W) \norm{A_{\ell-1}}_{\F}^2,
}\end{align}
where $C_F(W)$ is a feature-based curvature   \UPDATE{coefficient. Their ratio is
}\begin{equation}
\UPDATE{\frac{\widetilde L_\ell^{\op}}{\widetilde L_\ell^{\F}} = \frac{\norm{A_{\ell-1}}_\F^2}{\norm{A_{\ell-1}}_\op^2} = \strank{A_{\ell-1}}.
}\end{equation}
  \UPDATE{These are feature-based majorization coefficients, not generally the least global norm-smoothness constants of the full network; the full bound also contains a parameter-only correction and a Taylor remainder}.

  \UPDATE{D\&D's nuclear-rank/stable-rank condition therefore directly motivates a spectral--Frobenius comparison. It does not determine $L_\infty$, and hence does not by itself determine our sign--spectral ratio $\kappa=L_\infty/L_\op$. In particular, the coefficient identity above cannot be substituted into a ratio of the least global norm-smoothness constants without an additional theorem}.

\subsection{Activation Noise-to-Signal Ratio}

  \UPDATE{\citet[Cor.~2.7]{DavisDrusvyatskiy2025SpecGD} give a homogeneous specialization assuming a fixed feature matrix $X$ with nonzero columns, iid Gaussian $W$ with the stated scaling, $\sigma(0)=0$, homogeneity, polynomial growth, and a nonzero Gaussian mean. Under its finite-width probability conditions, the stable rank of $A=\sigma(WX)$ is bounded by the corresponding $(1\pm\varepsilon)$ factor times
}\begin{equation}
\UPDATE{\ssigma:= \frac{\E[\sigma(\gamma)^2]}{(\E[\sigma(\gamma)])^2}.
}\end{equation}
The   \UPDATE{full Theorem~2.6 handles non-homogeneous activations using a supremum of this moment ratio over an input-scale interval, together with scale-wise nonvanishing-mean and growth assumptions. Unit-scale moments for SiLU and GELU are therefore not substitutes for that scale-uniform quantity.
}

\subsection{Explicit \texorpdfstring{$\ssigma$}{activation NSR} Computations for VLM Activations}

\paragraph{ReLU $\sigma(t) = \max(0,t)$.}
$(1,1)$-homogeneous. $\E[\sigma(\gamma)] = 1/\sqrt{2\pi}$, $\E[\sigma(\gamma)^2] = 1/2$.
\[
\ssigma = \frac{1/2}{1/(2\pi)} = \pi \approx 3.14.
\]

\paragraph{Squared ReLU $\sigma(t) = (\max(0,t))^2$.}
$(2,2)$-homogeneous. $\E[\sigma(\gamma)] = \E[\gamma^2 \cdot \mathbb{1}[\gamma > 0]] = 1/2$, $\E[\sigma(\gamma)^2] = \E[\gamma^4 \cdot \mathbb{1}[\gamma > 0]] = 3/2$ (half of the fourth Gaussian moment).
\[
\ssigma = \frac{3/2}{(1/2)^2} = 6.
\]
This is approximately   \UPDATE{twice the ReLU unit-scale moment ratio; it does not by itself predict the sign--spectral curvature ratio}.

\paragraph{SiLU $\sigma(t) = t \cdot (1+e^{-t})^{-1}$.}
Non-homogeneous, so   \UPDATE{the ratio depends on input scale}.  \UPDATE{Numerical computation gives $\ssigma\approx8.33$ at unit scale. The Hermite relation is $s^{-1}\widehat{\sigma}_1(s)=\E[\sigma'(s\gamma)]=1/2$, not $\widehat{\sigma}_1(s)=1/2$ for all $s$.
}

\paragraph{GELU $\sigma(t) = t \cdot \Phi(t)$.}
Non-homogeneous. Numerical computation gives   \UPDATE{$\ssigma\approx5.34$ at unit scale. As with SiLU, this is descriptive and does not verify the scale-uniform assumptions of the full theorem}.

\subsection{The Pretrained-Weight Regime}

The   \UPDATE{cited random-feature bounds do not apply directly to pretrained weights or to arbitrary changes of activation. Consequently, observed failures or recoveries after changing an activation cannot be attributed to stable rank from this theorem alone; weight structure, gradient structure, and local curvature all change together.
}

 \subsection{Empirical Stable Rank Across Activations}
Across our LM MLP measurements, the post-activation stable rank   \UPDATE{remains numerically small } for SiLU, ReLU, and ReLU$^2$ under both AdamW and Muon (typical values $2$--$4$).  \UPDATE{These measurements are descriptive: they do not identify the global sign--spectral curvature ratio or isolate a causal activation effect.
}

\section{Experimental Details}
\label{app:experiments}

\UPDATE{This section collects reproducibility details for GPT-2/OpenWebText pre-training, the controlled input-rank intervention, and ViT-B/16 trained from scratch on CIFAR-100. It also records the protocol correction required before a weight-tied VLM comparison can enter the empirical claims.}

\subsection{Common Diagnostic Implementation}
\label{app:diagnostic_details}

  \UPDATE{The gate computes the directional score in Eq.~\eqref{eq:empirical_index}, not the ratio $B_{\mathrm{spec}}/B_{\mathrm{sign}}$ based on global smoothness constants. A signal estimate $\widehat G_K$ is formed from $K$ minibatch gradients. The ideal natural-norm directions for a fresh gradient probe are $D_{\mathrm{sign}}=\sign(g_b)$, with $\norm{D_{\mathrm{sign}}}_\infty=1$, and $D_{\mathrm{spec}}=\polar(g_b)$, with $\norm{D_{\mathrm{spec}}}_\op=1$. Separate probe batches provide alignments with $\widehat G_K$ and autodiff} Hessian--vector \UPDATE{products. The implementation averages alignments and curvatures separately before forming $(\widehat s_g)_+^2/\widehat C_g$; finite-sample ratios are therefore Monte Carlo scores rather than unbiased estimators. An HVP is exact only for the selected finite batch and numerical graph, not for the population Hessian. A zero positive-part sign alignment makes the ratio undefined, as in Eq.~\eqref{eq:empirical_index}. Nonpositive or nonfinite curvature and undefined or nonfinite scores require implementation safeguards rather than a theorem-backed decision. The exact polar factor above describes the ideal direction; Appendix~\ref{app:scheduled_calibration} uses an FP32 five-step Newton--Schulz approximation, whose operator norm need not be exactly one. Frobenius rescaling of a direction leaves $s(D)^2/C(D)$ invariant only when the same scaling is applied to both terms; raw directional-curvature ratios are reported in the natural sign/spectral normalization}.

\subsection{Code Availability}
\label{app:code_availability}
\label{app:code}
  \UPDATE{The anonymous URL } \url{https://anonymous.4open.science/r/g-scion-4EB6/README.md}   \UPDATE{contained a README placeholder at submission. It did not contain the full training, evaluation, and diagnostic implementation needed for code-level reproduction; we acknowledge this artifact limitation explicitly.
}

\noindent\TODO{Replace the submission-time placeholder with venue-appropriate complete LM and VLM repositories, pinned release commits, environment files, and reproduction commands.}

\subsection{Language Model Pre-Training}
\label{app:lm_empirical}

\subsubsection{G-Scion Configuration Details}
\label{app:gscion}

Generalized-Scion (G-Scion) extends the Scion baseline by making the optimizer assignment of the embedding and output head layers \emph{adaptive} during training.
The optimizer assignment is denoted as (Embedding, Hidden, Output):

\begin{itemize}[leftmargin=1.2em]
\item \textbf{G-Scion (embedding):} \UPDATE{Embedding is forced spectral through warmup and is then eligible for one spectral-to-sign decision. Hidden remains Muon and output remains SignSGD.}
\item \textbf{G-Scion (output):} \UPDATE{Embedding remains SignSGD, hidden remains Muon, and output is forced spectral through warmup before becoming eligible for one spectral-to-sign decision.}
\item \textbf{G-Scion (embedding, output):} \UPDATE{Both edge groups are forced spectral through warmup and independently eligible for one spectral-to-sign decision; hidden remains Muon.}
\end{itemize}

\paragraph{Evaluation protocol.}
\UPDATE{$\Rhat$ is evaluated every $N{=}50$ steps after warmup. Each valid log score is smoothed with EMA factor $0.9$; a layer switches only after two consecutive observations beyond the $0.1$ margin around the nominal zero threshold. Fresh signal, direction, and HVP probes are aggregated as described above. Once switched, a layer remains locked.}

\noindent\TODO{Add held-out same-iterate score calibration; the $31$M static $2{\times}2$ controls, submitted forced-warm-up gate, corrected one-way gate, route replay, and bidirectional routing; and one-factor $N$, $K$, and $\tau$ sensitivity with same-allocation timing and peak memory.}

\paragraph{Baselines.}
\begin{itemize}[leftmargin=1.2em]
\item \textbf{Scion:} SignSGD (embedding), Muon (hidden), SignSGD (output).
\item \textbf{Muon:} AdamW (embedding), Muon (hidden), AdamW (output).
\end{itemize}

\paragraph{Cross-layer interaction effects.}
\UPDATE{A layerwise score is not a joint-assignment guarantee: scoring blocks separately omits cross-block terms in the quadratic model of a simultaneous update. The available records do not establish a joint-oracle failure rate or isolate a causal mechanism for such failures. Quantifying either requires matched joint candidate updates from the same checkpoints, evaluated on independent batches; final losses from separately trained routes do not identify these one-step interactions.}

\subsubsection{Batch-Size Sweep on Tiny Shakespeare}
\label{app:lm_batchsize}

\begin{figure}[ht]
\centering
\includegraphics[width=0.96\linewidth]{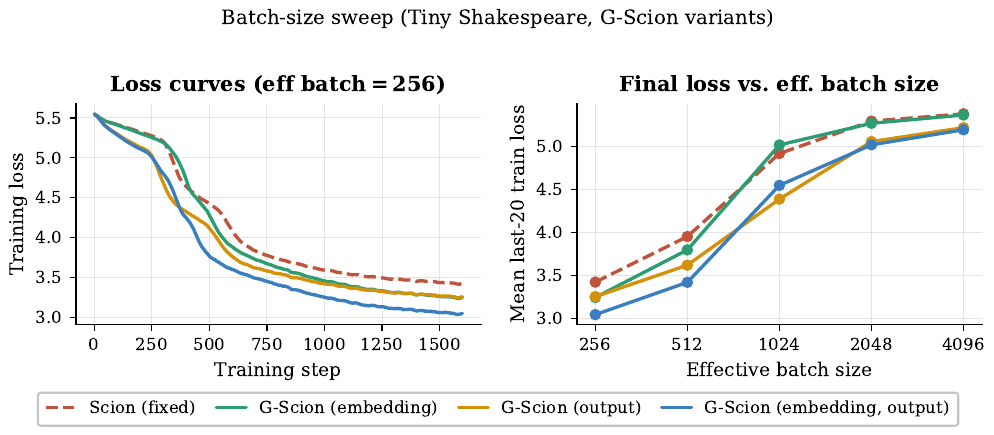}
\caption{\textbf{Batch-size sweep (Tiny Shakespeare, G-Scion variants).}
\emph{Left:} Training loss curves at effective batch $256$ ($1{,}600$ steps). \UPDATE{G-Scion (embedding, output)} leads throughout; \UPDATE{G-Scion (embedding)} tracks Scion closely, while \UPDATE{G-Scion (output)} and \UPDATE{G-Scion (embedding, output)} pull away via output-layer adaptation.
\emph{Right:} Mean last-$20$ training loss vs.\ effective batch size ($\log_2$ scale). All G-Scion variants beat Scion at every  \UPDATE{reported } batch size; the  \UPDATE{observed } margin is largest at small batch  and shrinks as the effective batch grows.}
\label{fig:batch_sweep}
\end{figure}

  \UPDATE{The batch-size sweeps observe that} G-Scion's advantage is largest at small $B$. \UPDATE{This pattern is compatible with a smaller spectral relative-noise penalty, but the theory does not predict it universally without an ordering between $\alpha$ and $\beta$. The high-SNR global-smoothness ratio contains no batch-size dependence. These results are therefore empirical observations rather than confirmation of a universal small-batch theorem}.

\subsubsection{OWT Activation Ablation with \texorpdfstring{$\Rhat$}{the directional score} (12 Runs)}
\label{app:lm_owt}

Twelve NanoGPT ($31$M, $6$ layers) experiments on OpenWebText use the   \UPDATE{directional Monte Carlo score
} $\Rhat$ (Eq.~\ref{eq:empirical_index}) for gating decisions, organised in a $3{\times}4$ grid of
MLP activations (GELU, ReLU$^2$, SiLU) $\times$ optimizer variants (Scion, G-Scion (embedding),
G-Scion (output), G-Scion (embedding, output)).
All runs: $2{,}000$ steps, effective batch $128$ (physical $8{\times}16$ microbatches),
peak learning rate $10^{-3}$ (cosine schedule, min $10^{-4}$, $100$-step warmup).

\paragraph{Validation losses.}
Table~\ref{tab:a100_owt_full} reports final validation losses for all $12$ runs. The validation-loss ordering (lower is better) G-Scion (embedding, output) $<$ G-Scion (embedding) $<$ G-Scion (output) $<$ Scion holds for all three activations. In the single-seed block, the best-to-worst spread across activations within each method is $<0.05$ in validation loss, while the cross-method spread reaches $0.37$ in the GELU row. The two-seed means have a within-method activation spread of up to $0.052$ but these descriptive comparisons do not establish a general ranking of activation and optimizer effects.

\begin{table}[t]
\centering
\caption{Final validation loss, OWT $31$M, $\Rhat$. \textbf{Top:} single-seed runs ($12$ runs, seed 1337). \textbf{Bottom:} two-seed replication ($24$ runs, seeds 42 and 2024; mean $\pm$ population std). The validation-loss ordering (lower is better) G-Scion (embedding, output) $<$ G-Scion (embedding) $<$ G-Scion (output) $<$ Scion is uniform across all activations and both seed sets.}
\label{tab:a100_owt_full}
\footnotesize
\setlength{\tabcolsep}{4pt}
\renewcommand{\arraystretch}{1.08}

\begin{tabular}{lccccc}
\toprule
\textbf{Activation} & \textbf{Scion} & \textbf{G-Scion (embed.)} & \textbf{G-Scion (out.)} & \textbf{G-Scion (embed., out.)} & \textbf{Best--Worst} \\
\midrule
\multicolumn{6}{l}{\textit{Single seed (seed 1337)}} \\
\midrule
GELU       & $5.526$ & $5.278$ & $5.459$ & $\mathbf{5.161}$ & $0.365$ \\
ReLU$^2$   & $5.490$ & $5.293$ & $5.421$ & $\mathbf{5.196}$ & $0.294$ \\
SiLU       & $5.531$ & $5.309$ & $5.462$ & $\mathbf{5.205}$ & $0.326$ \\
\midrule
Mean               & $5.516$ & $5.293$ & $5.447$ & $\mathbf{5.187}$ & \\
$\Delta$ vs.\ Scion & --- & $+0.22$ & $+0.07$ & $\mathbf{+0.33}$ & \\
\midrule
\multicolumn{6}{l}{\textit{Two-seed replication (seeds 42 \& 2024; mean $\pm$ std, $n=2$)}} \\
\midrule
GELU       & $5.531 \pm 0.011$ & $5.315 \pm 0.011$ & $5.448 \pm 0.012$ & $\mathbf{5.181 \pm 0.000}$ & $0.350$ \\
ReLU$^2$   & $5.507 \pm 0.001$ & $5.351 \pm 0.001$ & $5.431 \pm 0.001$ & $\mathbf{5.232 \pm 0.000}$ & $0.275$ \\
SiLU       & $5.559 \pm 0.015$ & $5.324 \pm 0.020$ & $5.454 \pm 0.008$ & $\mathbf{5.205 \pm 0.001}$ & $0.354$ \\
\midrule
Mean               & $5.532$ & $5.330$ & $5.444$ & $\mathbf{5.206}$ & \\
$\Delta$ vs.\ Scion & --- & $+0.20$ & $+0.09$ & $\mathbf{+0.33}$ & \\
\bottomrule
\end{tabular}
\end{table}

\begin{figure}[ht]
\centering
\includegraphics[width=0.50\linewidth]{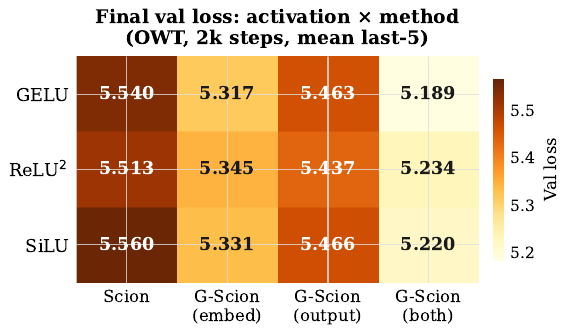}
\caption{\textbf{Activation $\times$ method validation loss} (OWT, 31M, 2\,000 steps,
$\Rhat$; mean of last-5 checkpoints). Lower numerical values indicate better validation loss.
The validation-loss ordering (lower is better) G-Scion (embedding, output) $<$ G-Scion (embedding) $<$ G-Scion (output) $<$ Scion holds uniformly across all
three activations.  Activation function is a second-order effect (within-column spread
$<0.05$ \UPDATE{in validation loss}).}
\label{fig:lm_a100_heatmap}
\end{figure}

\begin{figure}[ht]
\centering
\includegraphics[width=0.96\linewidth]{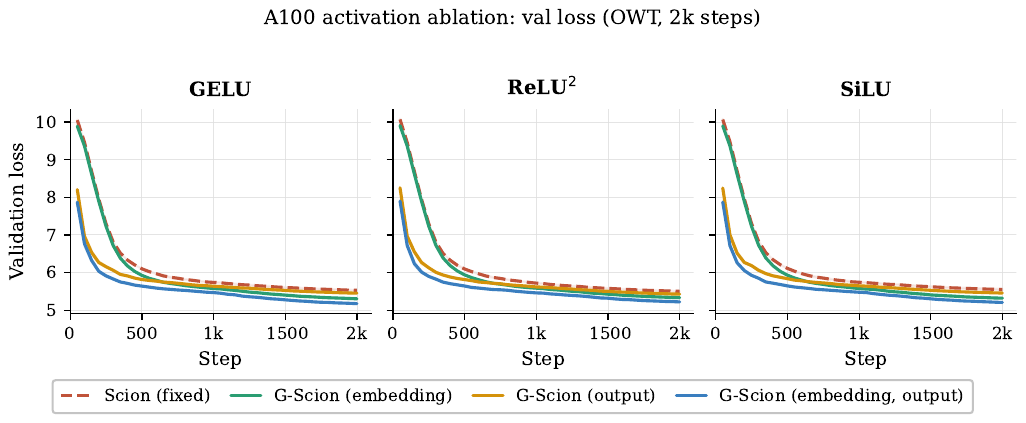}
\caption{\textbf{Validation loss curves} (OWT, 31M, 2\,000 steps, $\Rhat$)
for all three MLP activations. Colours: G-Scion (embedding, output) (blue), G-Scion (embedding) (green),
G-Scion (output) (orange), Scion (vermilion).  Ordering is preserved across activations;
G-Scion (embedding, output) leads in every panel.}
\label{fig:lm_a100_val_loss}
\end{figure}

\paragraph{Diagnostic readings and gate decisions.}
The raw diagnostic at a valid measurement $j$ is $s_{\ell,j}=\log\Rhat_{\ell,j}$. The gate instead acts on its exponential moving average, initialized by $z_{\ell,1}=s_{\ell,1}$ and subsequently updated as $z_{\ell,j}=\beta z_{\ell,j-1}+(1-\beta)s_{\ell,j}$. For the stated spectral-to-sign rule, the decision requires two consecutive smoothed readings below $\tau-m$, followed by locking (Algorithm~\ref{alg:g-scion}). A fraction of raw readings below zero is therefore not a switch frequency or a confirmation rate. Shadow measurements on a fixed route likewise do not constitute routing decisions.

Figure~\ref{fig:lm_logA} is retained as a descriptive diagnostic trace. We omit the historical pooled-score table and the exact first-decision range because their raw-versus-smoothed definitions, sampled steps and aggregation weights cannot be reconstructed from the available summaries. In particular, a summary over the training horizon must not be labelled as exclusively pre-switch. Reproducible numerical summaries must identify the runs and activations, raw score or EMA, included steps and phase boundaries, valid-observation count, and weighting across runs. The reported route change at step $150$ is stated separately below; no pooled sign-side percentage is used to certify it.

\begin{table}[ht]
\centering
\caption{AdamW-side reference at $31$M (same setting as \tabref{tab:gscion}). \emph{Muon}: AdamW on embed/head + Muon on hidden, the standard practical recipe. \emph{G-Muon}: one-way AdamW$\to$spectral gate on embed/head + Muon on hidden. AdamW adds momentum and per-parameter adaptivity outside the one-step bound, so this is a reference rather than a like-for-like sign-vs-spectral comparison; the gate does not improve over Muon in this regime. The scope of the theorem does not explain this performance difference.}
\label{tab:gscion_muon}
\small
\begin{tabular}{lcc}
\toprule
\textbf{Method} & \textbf{Final val loss} & \textbf{$\Delta$ vs.\ Muon} \\
\midrule
Muon (AdamW embed/head + Muon hidden)             & $4.755 \pm 0.035$ & --- \\
G-Muon (AdamW$\to$Spectral embed/head + Muon hidden) & $4.851 \pm 0.016$ & $-0.096$ \\
\bottomrule
\end{tabular}
\end{table}

\paragraph{Output gate timing.}

For G-Scion (output) and G-Scion (embedding, output), the output gate fires at step $150$ universally across all $6$
runs (two methods $\times$ three activations).
The stated schedule begins decisions at $t=100$, with cadence $N=50$; a switch at $t=150$ is the second scheduled opportunity. This timing is distinct from the numerical value of either the raw score or its EMA. The exact first-decision score range is not used as quantitative evidence until the original logging and aggregation records are reconciled (see the diagnostic-reporting paragraph above).
After locking to sign at step $150$, the output stays in sign for the remaining
${\approx}1{,}850$ steps.

\paragraph{Raw and normalized proxy comparisons.}\label{par:Rzer_comparison}
\UPDATE{For nonzero $G\in\mathbb{R}^{m\times n}$, the raw proxy $\Rzer=\nucnorm{G}^2/\onenorm{G}^2\le1$ cannot favor spectral at a unit threshold. The dimension-normalized score $I_{\mathrm{proxy}}=\sqrt{mn\,\Rzer}$ can exceed one; its unit-threshold decision is equivalent to substituting $\kappa=mn$, not to measuring curvature. The historical proxy-run summaries do not establish the exact normalization and gate configuration, so we do not attribute their reported spectral-side output readings to raw $\Rzer$ or use the claimed matched loss comparison as evidence. The controlled same-iterate experiment in Appendix~\ref{app:exp4_kappa} instead compares $\Rzer$ with $R_\mathrm{quad}=\Rzer\hat\kappa_\mathrm{dir}$ using measured directional curvature. This comparison does not identify the global ratio $\kappa$ or equate $\Rhat$ with $\Rinft$.}

\paragraph{Replication across seeds.}
The identical $3\times 4$ grid was replicated with two additional seeds ($42$ and $2024$), for $24$ runs in total; per-cell means are in the lower block of Table~\ref{tab:a100_owt_full}. The validation-loss ordering (lower is better) G-Scion (embedding, output) $<$ G-Scion (embedding) $<$ G-Scion (output) $<$ Scion holds for all three activations in the single-seed results and the two-seed means. From the displayed rounded values, the per-cell improvement over Scion differs between these summaries by up to $0.041$: for ReLU$^2$ with G-Scion (embedding), the improvement is $5.490-5.293=0.197$ for seed $1337$, versus $5.507-5.351=0.156$ for the two-seed mean. Raw validation losses differ by up to $0.058$ per cell. Thus the method ordering is reproduced in these summaries, but agreement within $0.02$ per cell is not supported.

\paragraph{Learning-rate stratification at the output head.}
\UPDATE{A sweep over four learning rates and five effective batch sizes ($80$ runs) shows that the phase-aggregated output score is learning-rate dependent (Table~\ref{tab:lr_output_logA}). At conservative learning rates both phase means are negative; at $3\times10^{-3}$ the spectral-phase mean is negative and the subsequent sign-phase mean is positive. This is a phase-stratified aggregate, not a single-run bidirectional transition trace.}

\begin{table}[ht]
\centering
\caption{Output $\log\Rhat$ by learning rate and phase (OWT $31$M, 80-run sweep,
$\Rhat$). ``Spectral phase'': steps $0$--$100$ (before gate). ``Sign phase'': steps $150$--$1950$
(after gate). All gated-output runs switch at step $150$ regardless of LR.}
\label{tab:lr_output_logA}
\small
\begin{tabular}{lcc}
\toprule
\textbf{LR} & \textbf{Out $\log\Rhat$, spectral phase} & \textbf{Out $\log\Rhat$, sign phase} \\
\midrule
$10^{-4}$      & $-1.83$ \;($100\%$ sign-side) & $-1.76$ \;($100\%$ sign-side) \\
$3{\times}10^{-4}$ & $-2.46$ \;($100\%$ sign-side) & $-1.94$ \;($100\%$ sign-side) \\
$10^{-3}$      & $-1.12$ \;($100\%$ sign-side) & $-0.29$ \;($72\%$ sign-side)  \\
$3{\times}10^{-3}$ & $-0.27$ \;($100\%$ sign-side) & $\mathbf{+0.70}$ \;($19\%$ sign-side) \\
\bottomrule
\end{tabular}
\end{table}

\subsubsection{\UPDATE{Scale-Sweep Provenance and Scope}}
\label{app:lm_scale_sweep}

\UPDATE{The original scale analysis combined two different diagnostics. Its $1$B plot displayed a curvature-free proxy whose exact normalization is not established by the available summaries, whereas the available HVP-based output score is negative and the realised route becomes sign after warmup. Several $163$M/$406$M method--seed cells lack complete directional records. We therefore exclude those tables and figures from score validation. A matched scale test records, for every seed and method, the directional score, parameter route, and static sign/spectral edge controls.}

\noindent\TODO{After the $31$M score-validity and assignment tests pass, add the matched $31$M/$1$B directional scores, realised routes, and static output-head ordering. Claim a scale reversal only if both the score sign and the static ordering reverse.}
\subsubsection{Coordinate invariance}
\label{app:split_invariance}

For a gradient matrix $G$ and orthogonal matrices $U,V$, the transformed gradient $G'=UGV^\top$ has the same spectral and nuclear norms as $G$. Thus the spectral geometry is invariant under orthogonal changes of coordinates. This statement concerns a consistent reparameterization of the objective and its gradients, not an isolated transformation of the weight matrix.

The entrywise $\ell_\infty$ geometry underlying SignSGD is generally not invariant under arbitrary orthogonal changes of coordinates~\citep{balles2020geometry}. It is, however, invariant under coordinate permutations and sign flips. Consequently, attention-head or neuron reorderings that only permute coordinates do not distinguish sign from spectral geometry on invariance grounds.

\subsection{Vision: ViT-B/16 on CIFAR-100}
\label{app:vit_cifar}

\UPDATE{This appendix reports image-classification experiments that apply the four edge-assignment trajectories of \secref{sec:lm_validation} to ViT-B/16~\citep{dosovitskiy2020image} trained from scratch on CIFAR-100~\citep{krizhevsky2009learning}. The evidence concerns final assignment outcomes rather than validation of the directional score.}

\subsubsection{Setup}
\label{app:vit_setup}

\paragraph{Architecture and data.}
ViT-B/16 ($12$ encoder blocks, width $768$, $86$M parameters) with \texttt{num\_classes}${=}100$, image size $224$, trained from random initialisation on CIFAR-100 ($50$k train, $10$k test; standard random-crop+flip augmentation).   \UPDATE{Random initialization alone does not verify all assumptions } of \Propref{prop:strank_activation}\UPDATE{; the experiment is an empirical cross-modal check rather than an application of that theorem}.

\paragraph{Optimizer layer partition.}
\UPDATE{Hidden attention/MLP layers always use Muon. The edge groups are $\{\text{conv\_proj}+\text{class\_token}+\text{pos\_embed},\,\text{heads.head}\}$, matching the LM ``embedding'' and ``output'' roles. The runs use a $100$-step forced warmup and one-way locking. The HVP score is nonfinite at many ViT patch-embedding measurements, so neither the implementation flag nor the plotted proxy is used to infer a geometry preference.}

\paragraph{Sweep grid.}
We sweep two learning rates $\{3\times 10^{-4},\,3\times 10^{-3}\}$ and three effective batch sizes $\{128, 256, 512\}$, with two seeds ($1337$ and $3407$) per cell, for $48$ runs at $1{,}000$ steps each. A $3$-seed replicate at effective batch $256$ and learning rate $3\times 10^{-3}$ ($12$ further runs) corroborates the corresponding sweep cell.

\subsubsection{Validation Accuracy and Batch-Size Sensitivity}
\label{app:vit_headline}

\begin{table}[ht]
\centering
\caption{\UPDATE{\textbf{Final validation accuracy on ViT-B/16 CIFAR-100} ($1{,}000$ steps, $2$ seeds per cell, lr${=}3{\times}10^{-3}$). The joint spectral-edge assignment trajectory exceeds fixed Scion at every reported effective batch size. $\Delta$ is its accuracy minus Scion accuracy in percentage points.}}
\label{tab:vit_cifar_batch}
\small
\resizebox{\linewidth}{!}{
\begin{tabular}{lccccc}
 \toprule
\textbf{Effective batch} & \textbf{Scion} & \textbf{G-Scion (embedding)} & \textbf{G-Scion (output)} & \textbf{G-Scion (embedding, output)} & $\Delta$ \\
\midrule
$128$ & $30.9 \pm 1.2$ & $32.5 \pm 2.1$ & $43.8 \pm 2.8$ & $\mathbf{42.8 \pm 0.4}$ & $+11.9$ \\
$256$ & $39.5 \pm 0.7$ & $41.0 \pm 0.4$ & $45.3 \pm 0.4$ & $\mathbf{51.0 \pm 1.1}$ & $+11.5$ \\
$512$ & $40.0 \pm 0.7$ & $43.9 \pm 2.7$ & $44.1 \pm 1.6$ & $\mathbf{47.9 \pm 1.2}$ & $+\phantom{0}7.9$ \\
\bottomrule
\end{tabular}
}
\end{table}

\UPDATE{The joint edge-assignment trajectory exceeds fixed Scion at every reported effective batch size (Table~\ref{tab:vit_cifar_batch}). At batch $128$, the output-only and joint cells overlap within seed spread; at batches $256$ and $512$, the joint cell is highest. The observed gap shrinks with batch size, but this sweep does not isolate the relative noise terms required for a theoretical direction-of-effect claim.}

\subsubsection{\UPDATE{Score Provenance}}
\label{app:vit_Is}

\UPDATE{The historical ViT plot was labelled $\Rhat$, but the patch-embedding directional score is valid at only three of eight recorded measurements and becomes nonfinite. The plot is therefore removed, and the ViT result is used only as evidence that the alternative assignment trajectories improve final accuracy. A matched static spectral-edge control is required to attribute this gain to online identification.}

\noindent\TODO{Diagnose the nonfinite ViT patch-embedding score by component, direction, and sample. After a finite preflight, add matched static spectral-edge and adaptive controls or retain the outcome-only scope.}

\subsubsection{Per-layer decomposition: nonadditive outcomes}
\label{app:vit_decomposition}

\UPDATE{At effective batch $256$, the joint gain over Scion is larger than the sum of the two single-group gains, whereas the $31$M decomposition in Table~\ref{tab:gscion}(b) is close to additive. This is nonadditivity of final training outcomes; it does not isolate cross-block curvature or establish an error in a layerwise score. Matched joint updates at a common checkpoint are needed to test those interactions.}

\subsection{\UPDATE{Weight-Tied VLM Protocol}}
\label{app:vlm_details}

\subsubsection{\UPDATE{Architecture and estimand}}
\label{app:vlm_setup}

\UPDATE{nanoVLM~\citep{nanovlm2025} combines a SigLIP2 vision encoder~\citep{tschannen2025siglip2}, an MLP projector, and a SmolLM2 decoder~\citep{allal2025smollm2}. The decoder's token embedding and output head are aliases of the same $\mathrm{Parameter}$. The relevant estimand is therefore the geometry assigned to one shared embedding--head tensor, including gradient contributions from both roles, not separate ``embedding'' and ``head'' decisions.}

\subsubsection{\UPDATE{Audit of the historical configurations}}
\label{app:vlm_partition}

\UPDATE{The historical runs set the full directional diagnostic off. During optimizer construction, the shared tensor was first encountered through its embedding alias and assigned once to the AdamW group. Run labels referring to a head route therefore do not identify the realised optimizer on that tensor. The old proxy values and downstream metrics cannot support a sign-versus-spectral head comparison, a VLM head reversal, or activation causality, and are not included as empirical evidence.}

\subsubsection{\UPDATE{Required matched design}}
\label{app:vlm_rhat}

\UPDATE{A matched evaluation must (i) enumerate all aliases and verify a single parameter identity, (ii) assign exactly one realised optimizer route to the shared tensor, (iii) compute its joint alignment and HVP, (iv) compare static sign and static spectral routes at matched hyperparameters, and (v) evaluate the adaptive route against those static corners. The projector should be crossed independently because it is the interface receiving visual features. Untying the embedding and head changes the architecture and is therefore a secondary control rather than a substitute for the tied comparison.}

\noindent\TODO{Add the corrected tied-VLM audit: alias identity, one shared optimizer occurrence, joint alignment/HVP finite rates, invalid-score abstentions, realised routes, and state transfer. Document five-step Newton--Schulz as the deployed spectral direction and exact polar only as the failure-isolated P0 reference.}

\subsubsection{Sweep grid and seeds}
\label{app:vlm_sweep}

\UPDATE{The corrected tied comparison and its static controls are required before reporting VLM performance or a cross-architecture reversal. No result from that planned comparison is claimed in this version.}

\noindent\TODO{Replace this placeholder with the tied shared-tensor/projector $2{\times}2$ deployed Signum/Muon grid, adaptive condition, and practical AdamW-edge reference across paired seeds. Report an untied role control only as a changed-architecture study, and add an activation-only intervention only if a causal activation claim is retained.}

\section{Comprehensive Related Work}
 \label{app:related_work}

 \paragraph{Spectral and matrix geometry in large-model training.}
Many widely used optimizers for deep networks can be viewed as preconditioned first-order methods, including coordinatewise adaptive methods such as AdaGrad and Adam and matrix-preconditioned methods such as K-FAC and Shampoo \citep{duchi2011adaptive,kingma2015adam,martens2015optimizing,gupta2018shampoo}. A complementary perspective emphasises the geometry of the update for matrix-valued parameters \citep{carlson2015stochastic,Ber24,Vep25}, recently exemplified by Muon \citep{jordan2024muon}, which performs a spectral-norm steepest-descent-style update and has emerged as a competitive alternative to AdamW in language-model pretraining   \UPDATE{\citep{Liu25b,zeng2025glm,Sha25}}; convergence bounds and a critical-batch-size analysis covering Muon with Nesterov momentum and weight decay are given by \citet{sato2025muonconvergence}. Muon has been shown to scale to billion-parameter LLMs and MoE training   \UPDATE{\citep{Liu25b}}, and related matrix-preconditioned methods such as SOAP also improve over AdamW in large-batch pretraining regimes \citep{Vya24,qiu2026hyperparameter}. The remaining open question is which geometry should be used at which layer of a given architecture, and under what conditions its advantage persists.

\paragraph{Mixed recipes and the assignment problem.}
The practical answer so far has been to mix geometries. Successful Muon-style recipes typically apply matrix-aware updates to hidden weight matrices while retaining AdamW-like updates on embeddings, normalization parameters, biases, or output layers   \UPDATE{\citep{Liu25b,Esc26}}. Hybrid methods make this selectivity even more explicit. SOAP can be read as Adam in a rotated basis induced by a matrix preconditioner \citep{Vya24}, COSMOS splits optimization across eigensubspaces using SOAP on leading directions and Muon on the remainder \citep{liu2026cosmos}, and adaptive Muon variants reintroduce variance adaptation rather than treating orthogonalization as sufficient \citep{Fra25,Si25,Li25e}. This line of work strongly suggests that the field is converging on mixed geometry rather than on a single universally best optimizer. Yet the assignment itself remains largely heuristic: the usual partition into ``matrix'' and ``non-matrix'' parameters is inherited from practice and often must be re-validated across scale, architecture, and tuning budget \citep{wen2026fantastic,qiu2026hyperparameter,Sem25}.

\paragraph{Why AdamW-like and sign-based geometry persists.}
Selectivity in mixed-optimizer recipes is consistent with transformer heterogeneity. Hessian-based analyses show that transformer blocks have markedly different local curvature, so a single Euclidean learning-rate geometry is a poor fit across the whole network \citep{zhang2024why}. Empirical work finds that Adam-like adaptivity is disproportionately important on a small set of sensitive parameter groups, especially the last layer and LayerNorm parameters, while other blocks tolerate simpler updates \citep{zhao2025deconstructing}. Sharpness and gradient diagnostics reach a similar conclusion: blockwise sharpness disparity persists throughout training \citep{wang2025the}, and gradient heterogeneity helps explain why coordinatewise or sign-like normalization remains effective where SGD fails \citep{tomihari2025understanding}. SignSGD \citep{bernstein2018signsgd} and the geometric interpretation of sign descent as entrywise steepest descent \citep{balles2020geometry} provide the sign-side foundation, and recent theory argues that Adam benefits from favourable $\ell_\infty$-geometry of the loss landscape \citep{xie2025adam}. Together these results indicate why AdamW-like behaviour remains difficult to displace entirely on the embedding and output side, even when spectral methods improve training on the internal weight matrices.

\paragraph{Criteria for spectral advantage and the role of stable rank.}
Among existing attempts to move beyond heuristic assignment, the closest precursor is the spectral-versus-GD analysis of \citet{DavisDrusvyatskiy2025SpecGD}. That work derives a layerwise condition under which spectral updates should outperform Euclidean gradient descent, governed by the gradient's nuclear-to-Frobenius ratio relative to the stable rank of the incoming activations. It is especially relevant because it ties optimizer preference to measurable properties of the current layer rather than to an architecture label alone, and because its empirical results support the view that internal transformer matrices are especially favorable to spectral geometry. The same work also links activation choice to optimizer suitability through stable rank, providing a first concrete bridge between architectural nonlinearities and geometry selection. A complementary Muon-specific account is given in \citet{wang2026muon}, which traces Muon's gains mainly to value-output attention and feedforward matrices and links that advantage to more isotropic singular spectra under heavy-tailed data. These results bring the literature close to a principled layerwise criterion, but they do not yet resolve the comparison that modern mixed recipes actually instantiate: spectral geometry versus AdamW-like coordinatewise or sign-based geometry.

\paragraph{Activation dependence and non-Euclidean noise.}
The architecture dependence of optimizer preference remains underdeveloped in the literature. Modern transformers rely on a range of activations, including GELU, Swish, and related variants \citep{hendrycks2016gelu,ramachandran2017swish,so2021primer}, but optimizer choice is usually treated as largely separate from activation design. The stable-rank perspective of \citet{DavisDrusvyatskiy2025SpecGD} begins to close that gap by tying activation-induced representation structure to the case for spectral updates. On the stochastic side, non-Euclidean gradient noise scales provide a way to compare geometries at finite batch size rather than only in deterministic limits \citep{naganuma2026adaptivebatchsizesusing}. Together, these ideas suggest that geometry preference should vary not only across layers, but also across batch regimes, activations, and architectures.

\paragraph{Unified geometry views and positioning of the present work.}
A broader theoretical literature clarifies that these comparisons should be posed in geometric terms. Norm-based and trust-region views   \UPDATE{interpret } sign-based   \UPDATE{and spectral updates through different metrics or constraint sets \citep{Ber24,pethick2025training,Kov25}. Relative to \citet{DavisDrusvyatskiy2025SpecGD}, the key shift here is from a } spectral-versus-Euclidean  \UPDATE{quadratic-model } comparison to the spectral-versus-sign   \UPDATE{assignment faced by modern mixed recipes. Relative to fixed prescriptions such as Scion \citep{pethick2025training}, our contribution is a measurable layerwise instrument: a signal--curvature factorisation derived from global lower bounds together with an empirically evaluated directional score. The controlled intervention tests the score at a common iterate, the $31$M output supplies an operational sign-side case, and the $31$M/ViT trajectories test alternative assignments against fixed Scion.}

\section{Compute Resources}
\label{app:compute}

\paragraph{Hardware.}
GPT-2 and ViT/CIFAR-100 experiments ran on NVIDIA A100 80\,GB GPUs, split between RunPod cloud instances (for the GPT-2 scale sweep, Appendix~\ref{app:lm_scale_sweep}, and the ViT-B/16 sweep, Appendix~\ref{app:vit_cifar}) and an institutional cluster (for the OWT $31$M activation ablation, Appendix~\ref{app:lm_owt}, and the small-scale \UPDATE{directional-curvature} diagnostic, Appendix~\ref{app:exp4_kappa}). Peak parallelism was four A100s, dominated by a $3$-GPU RunPod pod for the largest scales. \UPDATE{Historical nanoVLM pilots ran on a single NVIDIA H100 80\,GB per run; they are excluded from the empirical claims for the protocol reasons in Appendix~\ref{app:vlm_details}.}

\paragraph{Per-experiment wall-clock.}
The $1$B GPT-2 runs are the dominant cost at $\approx 140$ single-GPU hours per seed; the $163$M and $406$M runs scale roughly with parameter count. The OWT $31$M activation ablation completes at $\approx 1$ hour per run, the $12$-run ViT-B/16 paper sweep at $\approx 6.4$ hours per run, the $48$-run ViT learning-rate sweep at $\approx 2.8$ hours per run, and the small-scale \UPDATE{directional-curvature} probes within ten minutes per run. \UPDATE{The historical $1$h-to-$2$h nanoVLM comparison included exhaustive per-component diagnostics and does not isolate gate cost. In a separate end-to-end timing experiment with decision cadence $N{=}100$, G-Scion measured approximately $4\%$ overhead over its matched baseline. This measurement is configuration-specific; cost depends on cadence, probe count, unlocked groups, model, and implementation.}

\paragraph{Total compute budget.}
\UPDATE{The experimental program used approximately $2{,}400$ A100-GPU-hours plus $100$ H100-GPU-hours, excluding unlogged pilot runs. The A100 total includes about $1{,}700$ hours for the $1$B scale runs even though those runs are not used as directional-score evidence; the H100 total includes the historical nanoVLM studies excluded from the empirical claims.}

\clearpage
\section{Held-out scheduled-update calibration: development study}
\label{app:scheduled_calibration}
\paragraph{Design and estimands.}
We trained four fresh 31M language-model sources using static SS and PP assignments and seeds 1337 and 2024, for 2,000 updates. Here S denotes sign and P spectral, with the embedding assignment first and output assignment second. Matrix updates were memoryless; this is not a momentum-Muon study. The diagnostic grid uses checkpoints after 100, 700 and 1,999 updates, four units (joint token/position embeddings, output head, attention and MLP), gradient batches $B\in\{32,128,512\}$ and two repetitions, giving 288 paired probes. We use $K=12$, $M=8$, Hessian batch $H=128$, FP32 diagnostics and five Newton--Schulz iterations. Each sign/spectral candidate applies its actual scheduled displacement to the selected unit from the identical checkpoint; other units remain fixed. Model, optimizer and RNG restoration checks passed. Signal, direction, Hessian and update samples occupy separate training-token regions. Both candidates are evaluated on the same 32 disjoint held-out spans, aggregated into eight paired blocks.

We distinguish the implemented self-curvature score from a full-group diagnostic that also includes cross-member Hessian terms. Neither estimates $\RLB$. The alternative predictor $Q_{\mathrm{actual}}=-g^\top u-\tfrac12 u^\top H u$ uses the actual rounded scheduled displacement $u$ and independently estimated signal and curvature. It is a separate development predictor, not a retrospectively validated replacement for the gate.

\begin{table}[ht]
\centering
\small
\caption{Development scheduled-update ranking. Regret is selected held-out loss minus the better sign/spectral loss, averaged over the 288 tuples. The four undefined directional ratios retain the source-route fallback.}
\label{tab:scheduled_calibration}
\begin{tabular}{lrr}
\toprule
Selection rule & Correct / 288 & Mean regret (nats/token) \\
\midrule
Full-group directional diagnostic + fallback & 219 & 0.005037 \\
Always sign & 24 & 0.024432 \\
Always spectral & 264 & 0.003471 \\
$Q_{\mathrm{actual}}$ (development only) & 288 & 0.000000 \\
\bottomrule
\end{tabular}
\end{table}

\paragraph{Failures below and near a score of one.}
With prespecified log-score bins below $-0.2$, between $-0.2$ and $0.2$, and above $0.2$, the full-group threshold makes 7/30, 129/169 and 83/85 correct choices, respectively. The policy itself switches its preference at score one. Four ratios are undefined because of negative sign alignment in late SS output-head probes at $B=32$; the source-route fallback chooses sign and loses all four. No such cases are excluded from the primary table. Among the 284 defined pairs, self-curvature and full-group scores rank 205 and 219 correctly. All 28 changed choices occur in joint embeddings, where accuracy increases from 39/72 to 53/72. This diagnoses within-group cross curvature, not full embedding/head interactions.

\paragraph{Noise, route dependence and uncertainty.}
The full-group score makes 64/96, 81/96 and 74/96 correct choices at $B=32,128,512$, while the recorded uncorrected variance-to-squared-mean-gradient proxy decreases from 1.838 to 0.730 to 0.231. Lower measured noise therefore does not monotonically repair agreement. Always spectral has lower pooled regret, but the directional diagnostic has lower regret on each SS source; all PP-source winners are spectral. The balanced diagnostic grid is not an estimate of deployment frequencies. Its 288 tuples are correlated measurements from two training seeds, not 288 independent replications. Of the paired probes, 235 have the same winner in every held-out block and 53 show block heterogeneity.

\paragraph{Limits of the evidence.}
There are 24 sign wins, 264 spectral wins and no measured ties at the descriptive $10^{-5}$-nat band. This band has not been qualified against full-grid numerical repeatability, and actual LM loss near-ties remain untested. The historically used finite holdout lacks document identifiers; an untouched, document-independent final test is not established. The perfect development ranking of $Q_{\mathrm{actual}}$ requires frozen-predictor confirmation on fresh seeds and independent evaluation data. Separate quadratic software controls span both winners and ties but do not supply missing LM evidence. These results establish limitations of scheduled-update ranking in this setting; they do not establish an end-to-end optimizer ranking or invalidate the theorem's conservative-step guarantee.

\paragraph{Outstanding validation.}
The remaining experiments include independent calibration and numerical tie qualification; known-constant bound-certification checks; qualified Frobenius pairwise ordering; genuine stateful optimizer and equal-budget AdamW/Muon comparisons; joint embedding/head updates; and sensitivity, precision and cost measurements. The separately reported four-static-route comparison and warm-up/replay results are summarized in Appendix~\ref{app:assignment_followup}; they should not be described as unrun. The nine-arm implementation prepared for this development campaign has passed software checks but has not supplied those training results. Full per-seed/configuration reconciliation, one-shot and prespecified schedule controls, and independent testing of a frozen warm-up rule remain necessary. Score, realised geometry, switch time and lock state should be reported together. Broader activation, scale and correctly tied VLM claims need matched confirmation; reversible switching remains an extension.

\clearpage
\section{Static assignments, early locking and stage-dependent preferences}
\label{app:assignment_followup}

\paragraph{Study and evidence scope.}
A follow-up trains a $31$M-parameter GPT-2 on OpenWebText for $2{,}000$ updates under all four fixed sign/spectral assignments to embedding and output layers and the adaptive gate, with five seeds per setting. Paired checkpoint diagnostics separately compare the two candidate updates by their held-out loss decrease on three seeds. Table~\ref{tab:assignment_followup} transcribes the rounded final-loss labels from the follow-up summary figure. These are a separate experimental series from Table~\ref{tab:gscion} and Appendix~\ref{app:scheduled_calibration}; their values must not be pooled across configurations.

\paragraph{Local agreement and later mismatch.}
On the output head, every reported early recommendation for sign agrees with the larger held-out decrease across the three diagnostic seeds. This is evidence for those sampled decisions, not an error-rate estimate over all stages, layers or noise levels: the summary does not give the number of checkpoint comparisons. The gate chooses sign at step $150$ and remains locked through step $2{,}000$, whereas the paired checkpoint measurements report a spectral advantage around step $700$. Thus the assignment is already fixed for roughly $550$ updates when the preferred candidate changes. The summary figure distinguishes the post-switch score and its running average from a no-switch counterfactual trace. Their checkpoint correspondence is not specified in the summary; their crossings should not be equated with the held-out-loss crossover.

\paragraph{Assignment quality versus online adaptation.}
The rounded gap between the original gate and PP is $5.45-4.92=0.53$; PP is reported to outperform the gate on all five seeds. The warm-up variant improves the displayed gate loss to $5.23$, and a frozen-route replay also displays $5.23$. This equality concerns the warm-up variant, not the original $5.45$ gate, and rounded endpoints alone do not prove identical trajectories. Allowing reversal leaves the displayed final loss at $5.45$, and no return switch is reported. Inertia in the smoothed decision statistic is a plausible explanation, but the separate causal effects of the averaging horizon, lock and warm-up have not been isolated. These results support stage-dependent candidate preferences and diagnose limitations of the routing rule; they do not establish that ongoing measurement beats a static assignment, that the score is universally calibrated, or that the conservative-step bounds are empirically certified.

\paragraph{Separating the two evaluation questions.}
The checkpoint comparison tests whether the score predicts the better candidate update at the current state; the assignment matrix tests whether acting on the signal improves a complete training run. Agreement in the former does not imply improvement in the latter. The $Q_{\mathrm{actual}}$ result in Appendix~\ref{app:scheduled_calibration} comes from a separate development protocol and a different predictor, so it neither explains the follow-up's misses nor establishes that step size causes them. Likewise, the replay's rounded endpoint does not establish that the routing rule alone causes the gate's gap to the static spectral baseline. A diagnosis requires linking checkpoint calibration, raw and smoothed scores, realised routes and matched controller ablations.

\paragraph{Warm-up without hindsight.}
Choosing step $700$ because the same runs reveal a crossover would use evaluation outcomes to select the policy. A follow-up should instead choose a warm-up duration or stability rule using designated selection runs, freeze it, and evaluate on fresh seeds with all four static assignments, early one-shot selection and a fixed schedule. Equal tuning budgets and the same training protocol are needed. Raw score, smoothed score, realised geometry, switch times and lock state should be logged together. Reversal is a separate extension, with its averaging rule and thresholds frozen before evaluation.

\paragraph{Reproducibility boundary.}
The summary establishes the model/data family, training horizon, main five-seed comparison and three-seed diagnostic scope. It does not establish exact seed identities, all optimizer and schedule settings, candidate step scaling, checkpoint/holdout sampling, the warm-up rule, the replayed policy, or seed counts for every additional arm. Per-seed results, configuration hashes, route traces and the error-bar definition remain to be linked to the reported figure before treating this series as a fully reproducible benchmark. The checkpoint evidence is therefore retained at its reported scope, and is not used to override the broader scheduled-update development errors in Appendix~\ref{app:scheduled_calibration}.

\section{Additional details for the theoretical comparison}
\label{app:additional_theory}

\paragraph{Proof of Proposition~\ref{prop:lower_bound_sandwich}.}
\begin{proof}
\UPDATE{The two-sided Taylor remainder implied by $L_g$-smoothness gives}
\begin{equation}
\UPDATE{\eta\mu_g-\frac{L_g\eta^2}{2}
\le \Delta_g(\eta)
\le \eta\mu_g+\frac{L_g\eta^2}{2},
\qquad \mu_g:=\E[\inner{G}{D_g}\mid W].}
\end{equation}
\UPDATE{The alignment bound and H\"older's inequality give $S_g\le\mu_g\le M_g$. Substituting $\eta_g=S_g/L_g$ yields $B_g\le\Delta_g\le S_gM_g/L_g+S_g^2/(2L_g)=c_gB_g$; taking ratios proves Eq.~\eqref{eq:ratio_sandwich}.}
\end{proof}

\paragraph{Activation moment assumptions.}
\begin{definition}[Activation noise-to-signal ratio]\label{def:nsr}
For a pointwise activation $\sigma$ and $\gamma\sim\mathcal{N}(0,1)$ with $\E[\sigma(\gamma)]\neq 0$,
\begin{equation}\label{eq:nsr_def}
    \ssigma \;:=\; \frac{\E[\sigma(\gamma)^2]}{(\E[\sigma(\gamma)])^2}.
\end{equation}
\end{definition}

  \begin{proposition}[\UPDATE{Homogeneous activation specialization of {\citealp[Cor.~2.7]{DavisDrusvyatskiy2025SpecGD}}}]
 \label{prop:strank_activation}
  \UPDATE{Let $X$ have nonzero columns, let $W$ have iid Gaussian entries with the scaling of the cited result, and let $\sigma$ satisfy its homogeneity, growth, and nonzero-mean assumptions. Then, with the probability and finite-width factors stated in the cited corollary, $\strank{\sigma(WX)}$ is bounded by the corresponding $(1\pm\varepsilon)$ factor times $\ssigma$.
} \end{proposition}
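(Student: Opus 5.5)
The plan is to treat Proposition~\ref{prop:strank_activation} as a direct specialization of \citet[Cor.~2.7]{DavisDrusvyatskiy2025SpecGD}. The work is in checking that its hypotheses hold and then rewriting its conclusion in our notation.

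\textbf{Step 1: write the stable rank as a ratio of Frobenius and operator norms.} Set $A=\sigma(WX)$, so that $\strank{A}=\norm{A}_\F^2/\specnorm{A}^2$.

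\textbf{Step 2: lower-bound the denominator by a rank-one mean term.} Because $W$ is Gaussian with the cited scaling, column $j$ of $WX$ has iid $\mathcal{N}(0,\norm{x_j}^2/d)$-type entries. Homogeneity, together with nonzero columns, lets us factor out the column scale. Each column of $A$ therefore has mean $\E[\sigma(\gamma)]\cdot s_j\mathbf{1}$, where $s_j$ is an explicit column scale. It follows that $\specnorm{A}^2$ is at least the squared norm of the rank-one mean matrix up to a $(1-\varepsilon)$ factor; equivalently, $\specnorm{A}^2\ge(1-\varepsilon)\,m\,(\E\sigma(\gamma))^2\sum_j s_j^2$.

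\textbf{Step 3: concentrate the numerator.} The growth condition gives finite moments and concentration. Hence $\norm{A}_\F^2$ concentrates at $m\,\E[\sigma(\gamma)^2]\sum_j s_j^2$ within a $(1\pm\varepsilon)$ factor.

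\textbf{Step 4: take the ratio.} Dividing the two estimates cancels the common factor $m\sum_j s_j^2$ and yields $\strank{A}\le\frac{1+\varepsilon}{1-\varepsilon}\,\ssigma$. The nonzero-mean assumption is exactly what guarantees $\ssigma<\infty$ (Definition~\ref{def:nsr}).

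\textbf{Where the difficulty lies.} The main obstacle is the quantitative concentration in Step 2, that is, controlling the centered part so that it does not spoil the rank-one lower bound on $\specnorm{A}$. This is precisely where the finite-width and probability conditions of the cited corollary enter. Rather than redo that argument, I would invoke it verbatim. Since the statement inherits the constants and scaling conventions from the source, the proof reduces to matching hypotheses: homogeneity, growth, nonzero Gaussian mean, nonzero columns of $X$, and Gaussian scaling. We would then note that the homogeneous case makes the moment ratio scale-free, so the supremum over input scales required in their Theorem~2.6 collapses to the single unit-scale value $\ssigma$. This is also why SiLU and GELU are excluded from the statement, consistent with Table~\ref{tab:nsr}.
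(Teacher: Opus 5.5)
Your proposal is correct and matches the paper, which gives no independent argument either. The paper treats the proposition as a direct restatement of the cited Cor.~2.7 once the hypotheses are matched, with positive homogeneity making $\E[\sigma(s\gamma)^2]/(\E[\sigma(s\gamma)])^2$ independent of $s$, so that the scale-uniform supremum of the full theorem reduces to $\ssigma$. One small remark on your sketch: the lower bound on $\specnorm{A}^2$ does not require controlling the centered part $A-\E A$. Testing against $u=\mathbf{1}/\sqrt{m}$ and $v=s/\norm{s}$ reduces it to scalar concentration of a sum of $m$ iid row contributions $\sum_j s_j\,\sigma((WX)_{ij})$, exactly like the Frobenius term.
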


  \UPDATE{For homogeneous ReLU and } ReLU $^2$\UPDATE{, the unit-scale ratios are $\pi$ and $6$, respectively. SiLU and GELU are non-homogeneous and require the scale-uniform form of the full theorem; their unit-scale values ($\approx8.33$ and $\approx5.34$) are descriptive statistics, not constants supplied by the homogeneous corollary. Appendix~\ref{app:activation_theory} states the scope in detail}.

\paragraph{Frobenius comparison.}
For $E=\widehat G-G$, take $D_\F=\widehat G/\norm{\widehat G}_\F$ when $\widehat G\ne0$, and $D_\F=0$ otherwise. Cauchy--Schwarz and the triangle inequality give
\[
\inner{G}{D_\F}\ge\norm{G}_\F-2\norm{E}_\F.
\]
With $\nu_\F=\E[\norm{E}_\F\mid W]$ and Frobenius smoothness $L_\F>0$, the conservative step yields $B_\F=(\norm{G}_\F-2\nu_\F)_+^2/(2L_\F)$. In the positive-signal regime, the same proof gives pairwise bounds involving $B_\F$. This extends the analytical comparison without requiring a three-way selector; it does not show that Frobenius updates are empirically dominated.

\end{document}